\algnewcommand{\IfThen}[2]{
  \State \algorithmicif\ #1\ \algorithmicthen\ #2\ \algorithmicend\ \algorithmicif}
\pgfplotsset{compat=1.18}
\newcommand\convert[1]{\pgfmathprintnumber{#1}}
\newcommand\convertBold[1]{$\boldsymbol{\pgfmathprintnumber[assume math mode=true]{#1}}$}
\DeclareMathOperator*{\argmin}{arg\,min}
\DeclareMathOperator*{\argmax}{arg\,max}
\DeclareRobustCommand*\cal{\@fontswitch\relax\mathcal}
\theoremstyle{plain}
\newtheorem{theorem}{Theorem}[section]
\newtheorem{lemma}[theorem]{Lemma}
\newtheorem{corollary}[theorem]{Corollary}
\newtheorem{definition}[theorem]{Definition}
\newtheorem{remark}[theorem]{Remark}
\newcommand{\eqdef}{\triangleq}
\definecolor{codegreen}{rgb}{0,0.6,0}
\definecolor{codegray}{rgb}{0.5,0.5,0.5}
\definecolor{codepurple}{rgb}{0.58,0,0.82}
\definecolor{backcolour}{rgb}{0.95,0.95,0.92}
\lstdefinestyle{mystyle}{
    backgroundcolor=\color{backcolour},   
    commentstyle=\color{codegreen},
    keywordstyle=\color{magenta},
    numberstyle=\tiny\color{codegray},
    stringstyle=\color{codepurple},
    basicstyle=\ttfamily\footnotesize,
    breakatwhitespace=false,         
    breaklines=true,                 
    captionpos=b,                    
    keepspaces=true,                 
    numbers=left,                    
    numbersep=5pt,                  
    showspaces=false,                
    showstringspaces=false,
    showtabs=false,                  
    tabsize=2
}
\def\cL{L}
\def\cP{P}
\def\hmu{\hat{\mu}}
\def\bmu{\bar{\mu}}
\def\cO{\mathcal{O}}
\def\cN{\mathcal{N}}
\def\cI{\mathcal{I}}
\def\cB{\mathcal{B}}
\def\cT{\mathcal{T}}
\def\cM{\mathcal{M}}
\def\rg{D}
\def\ouralgo{\texttt{MAUB}}
\def\vmu{\boldsymbol{\mu}}
\def\vbmu{{\boldsymbol{\bar \mu}}}
\def\vhmu{{\boldsymbol{\hat \mu}}}
\newcommand\modif[1]{{\color{black}#1}}
\title{{\bf Efficient Matroid Bandit Linear Optimization Leveraging Unimodality}}
\author{Aurélien Delage$^1$, 
Romaric Gaudel$^1$
\\
$^1$ Univ. Rennes, \texttt{Inria}, \texttt{CNRS} \texttt{IRISA} - \texttt{UMR} 6074, F35000 Rennes, France
\\ \texttt{prenom.nom@irisa.fr}
}
\begin{document}

\maketitle

\begin{abstract}
    We study the combinatorial semi-bandit problem under matroid constraints.
    The regret achieved by recent approaches is optimal, in the sense that it matches the lower bound.
    Yet, time complexity remains an issue for large matroids or for matroids with costly membership oracles ({\em e.g.} online recommendation that ensures diversity).
    This paper sheds a new light on the matroid semi-bandit problem by exploiting its underlying unimodal structure.
    We demonstrate that, with negligible loss in regret, the number of iterations involving the membership oracle can be limited to $\cO(\log\log T)$. 
    This results in an overall improved time complexity of the learning process. 
    Experiments conducted on various matroid benchmarks show (i) no loss in regret compared to state-of-the-art approaches; and (ii) reduced time complexity and number of calls to the membership oracle.
\end{abstract}

\section{Introduction}

The semi-bandit framework models the interaction of an online learner that repeatedly selects one or multiple items to play from a known, and usually finite, groundset $E$. Stochastic feedback in terms of rewards is received for each selected item at each round. The goal is to maximize the sum of expected rewards, which is equivalent to minimizing the sum of regret for not selecting the optimal subset at each round.
Matroid bandit problems constrain the subsets of $E$ that can be played.
Applications range from learning routing networks \citep{kve-uai-14} and advertisement \citep{str-nips-09}, to task assignment \citep{che-colt-16}, learning maximum spanning trees \citep{pap-cc-98}, and leader-follower multi-agent planning \citep{cla-cdc-12,lin-cdc-11}.

The bandit algorithms designed for combinatorial problems with linear objective functions, such as \texttt{CUCB} \citep{che-icml-13} and \texttt{ESCB} \citep{com-nips-15} apply to matroid bandit problems. 
Their versions leveraging the strong matroid structure, respectively \texttt{OMM} \citep{kve-uai-14} and \texttt{KL-OSM} \citep{tal-aamas-16}, achieve better asymptotic regret, and \texttt{KL-OSM} is even optimal. 
However, time complexity still remains an issue, particularly for matroids with costly membership oracles \citep{tze-icml-24}.
The main bottleneck is the call to the {\em $\mathrm{greedy}$} algorithm at each iteration. 
The latter suffers a $\cO(|E|(\log|E|+\cT_{m}))$ time complexity, where $\cT_{m}$ is the time complexity of the membership oracle determining whether a set $I\cup\{x\}$ belongs to the matroid, given that $I \subset E$ does. 
Recently, \citet{tze-icml-24} proposed \texttt{FasterCUCB}, a first attempt to reduce the overall time complexity.
The authors approximate the greedy procedure using a dynamic maintenance of maximum-weight bases. 
Yet, this comes at the cost of a loss in regret; the introduction of domain-dependent update oracles; and an additional $\mathrm{polylog}(T)$ term in the per-round time complexity.

\paragraph{Contributions}
We present \texttt{MAUB}, a unimodal bandit algorithm tailored for matroid optimization. Our main contributions are:
\begin{enumerate}
    \item with negligible loss in regret, \ouralgo{} requires only $\cO(\log\log(T))$ calls to the matroid structure, leading to a reduced overall time complexity compared to \texttt{OMM}, \texttt{KL-OSM} and Faster\texttt{CUCB};
     \item \texttt{MAUB} does not require instance-specific matroid oracles, which makes it more general; and
     \item our alternative approach to matroid optimization  highlights the interest of unimodal structures for both the analysis of combinatorial bandit problems, and practical implementation of simple, yet efficient, learning algorithms.
\end{enumerate}

The article is organized as follows: Section~\ref{sec:relatedWork} presents the related work and Section~\ref{sec:background} gives the necessary background.
Next, we introduce our proposed algorithm, namely \ouralgo{}, in Section~\ref{sec:MAUB}.
Its theoretical guarantees are studied in Section~\ref{sec:MAUBAnalysis}, and Section~\ref{sec:experiments} illustrates its empirical behavior in numerical experiments.

\section{Related Work}
\label{sec:relatedWork}

\paragraph{Unimodal Bandit}

Unimodality considers graph-structured bandit problems. It is assumed that unless the current arm is optimal, there always exists a strictly better one in its neighborhood.  
It defines an important subclass of the general bandit problem \citep{cop-cdc-09,yu-icml-11, com-icml-14, tri-alt-20}.
To handle this class of problems, \cite{com-icml-14}'s algorithm, namely \texttt{OSUB}, splits the optimization problem at each round into two parts: (i) determining the best arm so far; and (ii) deciding what to play to ensure optimism.
The first problem is solved by examining {\em mean-statistics}, and the second one only looks at arms within the neighborhood of the current best arm. 
\cite{tri-alt-20, gau-icml-21, gau-icml-22} extended \texttt{OSUB} to the semi-bandit setting to handle their target application, multiple-play online recommendation systems (\texttt{ORS}).

The restricted exploration set has been leveraged by \cite{tri-alt-20, gau-icml-21, gau-icml-22} to design regret-optimal algorithms, by aligning neighborhood structures with the gap-dependent terms appearing in the logarithmic regret bound. 
While they did it to simplify algorithm design and regret analysis, in current paper we demonstrate that unimodality also path the way toward  the reduction of computational costs.
This reduction relies on two key factors: first, the neighborhood rarely changes, which minimizes the need for combinatorial optimization; and second, most of the time, decisions are restricted to a small neighborhood, keeping the per-step computation low.

\paragraph{Optimal Regret in Matroid Linear Optimization}

Table~\ref{tab:regretAndTimeAlgs} summarizes regret upper-bounds and time complexities of state-of-the-art matroid bandit algorithms. 
Both optimistic matroid maximization (\texttt{OMM}) \citep{kve-uai-14} and \texttt{KL}-based efficient sampling for matroids (\texttt{KL-OSM}) \citep{tal-aamas-16} rely on repeated calls to the greedy algorithm that computes the arm $B^*$ with the highest expected value according to {\em optimistic statistics}.
\texttt{OMM} corresponds to the direct application of \texttt{CUCB} \citep{che-icml-13} to matroids.
It uses \texttt{UCB}-like bonuses, and achieves a $\cO\left(\frac{|E|-D}{\Delta_{\mathrm{min}}}\log(T)\right)$ regret, where $D$ is the rank of the matroid, and $\Delta_{\mathrm{min}}$ is the minimum gap between the mean value of an element in the optimal arm and any element outside it.
\texttt{KL-OSM} addresses an extraneous multiplicative factor in this regret by replacing \texttt{UCB} bonuses with \texttt{KL} indices, yielding an optimal algorithm that matches the lower bound of \cite{kve-uai-14}.
However, reducing regret comes at a price: it limits applicability to reward distributions on $[0,1]$ and increases computational cost.

\paragraph{Approximation to Reduce Time Complexity}
Having these regret-optimal algorithms, the remaining question is computation complexity.
%
%
\cite{tze-icml-24} introduced  Faster\texttt{CUCB}, an algorithm focusing on this aspect. 
The approach maintains a maximum-weight basis, which is updated across iterations.
Calls to the membership oracle are thus replaced with calls to an update oracle, for which class-dependent efficient implementations exist. 
The cost, however, is the introduction of a multiplicative constant in the overall regret, and the per-round time complexity suffers an additional multiplicative $\mathrm{polylog}(T)$ factor. 
Furthermore, while the-round time complexity of Faster\texttt{CUCB} is sublinear in $|E|$ for various classes of matroids, it also contains a $\mathrm{polylog}(T)$ factor, which makes it slower than \ouralgo{} on the long run (as soon as $D\cT_u\mathrm{polylog}(T)>|E|$). 

For the sake of completeness,
\cite{per-icml-19} study the matroid semi-bandit problem when there is a submodular function to optimize. The rationale is that such function is either the primary objective, or arises from the design of the optimistic term ({\em e.g. } \texttt{ESCB} \citep{deg-nips-16}).
Solving the corresponding combinatorial problem at each iteration would be too prohibitive, so they (approximately) solve it  using either local search or a variant of the greedy algorithm. 
Yet, the time complexity is at least $\cO(D|E|)$, and each submodular maximization does not decrease the number of calls to the membership oracle compared to \texttt{OMM}.
In current paper we limit ourselves to linear objective for which such fancy optimism is not required to get regret-optimal algorithms.

\begin{table}
\centering
\caption{Overall regret and time complexity achieved by \texttt{OMM}, \texttt{KL-OSM}, Faster\texttt{CUCB} and \texttt{MAUB} for the matroid bandit problem. $\cT_u$ is the time complexity of updating maximum-weight bases in Faster\texttt{CUCB}. \citet[Section~3]{tze-icml-24} detail $\cT_u$ for uniform, graphic, partition, and transversal matroids. The mapping $\sigma$ is defined in Section~\ref{sec:unimodality}, and $\mathrm{poly}$ is a polynomial.}
\label{tab:regretAndTimeAlgs}
\resizebox{\linewidth}{!}{
 \begin{tabular}{@{}lll@{}}
    \toprule
    & Regret & Time Complexity (in $\cO()$)\\
      \midrule
     \texttt{CUCB}/\texttt{OMM} \citenum{kve-uai-14}    & $\sum_{e \notin B^*} \frac{16\cdot\log T}{\min_{i \in B^*, \mu_i>\mu_e}\mu_i-\mu_e}$  & $|E|(\log |E| + \cT_m)T$\\
     \texttt{KL-OSM} \citenum{tal-aamas-16} & $\sum_{e \notin B^*} \frac{(\mu_e - \mu_{\sigma(e)}) \log T}{\mathrm{kl}(\mu_e,\mu_{\sigma(e)})}$ & $|E|(\log |E| + \cT_m)T$\\
     Faster\texttt{CUCB} \citenum{tze-icml-24} &$\sum_{e \notin B^*} \frac{12\max_i|\mathrm{supp}(\mu_i)|\cdot\log T}{\min_{i \in B^*, \mu_i>\mu_e}\mu_i-\mu_e}$ & $ D\cT_uT\mathrm{polylog}\ T$ \\
     \texttt{MAUB} (Th.\ref{cor:timeComplexity}) & $\sum_{e \notin B^*} \frac{8\cdot\log T}{\mu_{\sigma(e) -\mu_e}}$ &\resizebox{0.65\linewidth}{!}{$|E|T+ \binom{|E|}{\rg}\mathrm{poly}(|E|,D)\cT_m\log\log T$}\\
    \bottomrule
  \end{tabular}
}
\end{table}

\section{Background}
\label{sec:background}

The following provides background and notations on general combinatorial bandit problems, the matroid structure, and the corresponding greedy algorithm.

{
We use bold symbols to denote collections ({\em e.g.} $\vmu$ stands for $(\mu_e)_{e \in E}$).
Also, for any set $I$ and element $e$, we write $I+e$ (resp. $I-e$) for $I\cup\{e\}$ (resp. $I\setminus\{e\}$). 
}

\paragraph{Combinatorial Semi-Bandit Setting}
We consider a combinatorial semi-bandit problem in which arms are subsets of a groundset $E$ of elementary actions. 
For any $e$ in $E$, $X_e(t)$ is the random variable associated with the reward of $e$ at time step $t$.
All random variables $(X_e(t))_{t\geq 1}$ are assumed i.i.d., with unknown mean $\mu_e\in(0,\infty)$.
We further assume that $\forall x,y \in E,\ \mu_x\neq\mu_y$. 
For any subset $I\subseteq E$, we define at time step $t$ $X_I(t)\eqdef\sum_{e\in I}X_e(t)$, and the corresponding expectation $\mu_I \eqdef\sum_{e\in I}\mu_e$. 
Random variables $(X_I(t))_{I\subseteq E}$ are consequently correlated. 

Semi-bandit interactions are done as follows: there is a set of allowed combinations $\cI\subseteq 2^E$ such that at each time step $t$, the player plays a subset $\cP(t)\in \cI$, observes $(X_e(t))_{e\in \cP(t)}$ and is rewarded $X_{\cP(t)}(t)$.
In this scenario, learning algorithms aim
at minimizing the \emph{(pseudo-)regret}
\begin{align}    
\label{def:regret}
R(t) \eqdef \sum_{s \leq t} \max_{I\in\cI}\mu_I - \mu_{\cP(s)}.
\end{align}

\paragraph{Matroids}
Matroids are constraining the set $\cI$.

\begin{definition}[Matroid, groundset, independent subsets, rank, bases]
A {\em matroid} $\cM$ is a pair $(E,\cI)$, where $E$ is a set of items, called the \emph{groundset}, and $\cI\subseteq 2^E$ is a subset of the powerset of $E$. Subsets $I$ in $\cI$ are said {\em independent}.

The matroid structure requires that (i) $\emptyset \in \cI$, (ii) {\em hereditary property}: every subset of an independent set is also independent, and (iii) {\em augmentation property}: for all $I,J$ in $\cI, |I|=|J|+1$ implies that there exists $e\in I\setminus J$ such that $J+e\in\cI$.

Axioms (ii) and (iii) imply that all maximal sets (for the inclusion order) have equal size, which defines the {\em rank} $\rg$ of $\cM$. These sets are the \emph{bases} $B\in\cB$ of $\cM$. 

\end{definition}

From these axioms, it follows the \emph{basis exchange property},
which underpins the unimodal graph explored by \ouralgo{} to identify an optimal independent set.

\begin{lemma}[basis exchange property \protect{\cite[Th.~10.2 ~p.177]{sch-cwi-03}}, \protect{\cite[Prop.~1]{tal-aamas-16}}]
\label{lem:exchange}
For any matroid $M=(E,\cI)$, it holds that:
    $$\forall X,Y\in\cB \implies \forall x\in X \setminus Y,\ \exists y \in Y \setminus X,\ X-x+y \in \cB.$$
\end{lemma}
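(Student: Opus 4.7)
The plan is to derive the exchange property directly from the three matroid axioms, using only the hereditary and augmentation properties together with the fact that all bases share the common cardinality $D$.

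First, I would fix $X,Y\in\cB$ and an element $x\in X\setminus Y$, and consider the set $X-x$. By the hereditary property, $X-x\in\cI$, and it has cardinality $|X|-1=D-1$. Since $|Y|=D=|X-x|+1$, the augmentation property applies: there exists an element $y\in Y\setminus(X-x)$ such that $(X-x)+y\in\cI$.

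Next, I would verify that $y$ in fact lies in $Y\setminus X$, not merely in $Y\setminus(X-x)$. The only way $y$ could fail to be in $Y\setminus X$ is $y=x$, but $x\notin Y$ by assumption (since $x\in X\setminus Y$), whereas $y\in Y$. Hence $y\neq x$ and $y\in Y\setminus X$.

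Finally, I would observe that $(X-x)+y$ is an independent set of size $D$. Since axioms (ii) and (iii) force every maximal independent set to have cardinality exactly $D$ (this is recalled in the definition just above the lemma), any independent set of size $D$ is already a basis. Therefore $X-x+y\in\cB$, which is the desired conclusion. No step presents a genuine obstacle; the only subtlety is the bookkeeping ruling out $y=x$, which is what forces $y$ to live in $Y\setminus X$ rather than merely in $Y\setminus(X-x)$.
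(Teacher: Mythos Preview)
Your argument is correct and is the standard elementary derivation of the basis exchange property from the hereditary and augmentation axioms. Note, however, that the paper does not actually prove this lemma: it is stated as a known result with citations to \cite{sch-cwi-03} and \cite{tal-aamas-16}, so there is no ``paper's own proof'' to compare against. Your proposal simply supplies the textbook proof that the paper omits, and every step is sound --- including the bookkeeping that $y\neq x$ because $y\in Y$ while $x\notin Y$, and the observation that an independent set of size $D$ must be a basis.
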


\paragraph{Linear Optimization in Matroids}

All means $\mu_e$ for $e$ in $E$ being non-negative, and since $\cB$ is finite, an independent set $B^*$ with highest value must be a base (namely $B^*\in\cB$). In the following, we thus restrict the arms of the bandit problem to bases $B \in \cB$.
We further assume that $B^*$ is uniquely defined.

If values $\vmu$ where known, one could compute the optimal base $B^*$ through the {\em greedy} algorithm (Algorithm~\ref{alg:greedy}). In essence, greedy iteratively constructs an optimal basis $B^*$ by considering elements $e_i \in E$ in decreasing order. At a step $i$, $e_i$ is added to $B^*$ if and only if it does not make $B^*$ dependent ({\em i.e.} $B^*+e_i\in\cI$). 

\begin{algorithm}
    \caption{Greedy Algorithm \citep{edm-mp-71}}
\label{alg:greedy}
\begin{algorithmic}
\Require element values $\vmu$
\State $B^*\gets\emptyset$
\State Find an ordering $\mu_{e_1}\geqslant\dots\geqslant\mu_{e_{|E|}}$
\For{$i=1,\dots,K$}
\IfThen{$B^*+e_i\in\cI$}{$B^*\gets B^*+e_i$}
\EndFor
\State \textbf{Return} $B^*$
\end{algorithmic}
\end{algorithm}

Each independence property is tested by querying a membership oracle with the subset $B^*+e_i$.
Hence the $\cO\left(T|E|(\log |E| + \cT_m)\right)$ time complexity of \texttt{OMM} and \texttt{KL-OSM}, which are calling greedy at each time step (on optimistic estimates of $\vmu$).

\section{Unimodal Approach to Learn in Matroids}
\label{sec:MAUB}

Unimodal bandits enable decoupling the optimistic selection rule from the combinatorial optimization over the matroid.
Concretely, optimization is performed on empirical means and only when the leader changes.
Since leader changes are rare, this results in very few membership-oracle queries and thus low computational cost.
The arm to play is then chosen optimistically within a small candidate set of neighbors.

The following formally describes the underlying unimodal structure in matroid bandit problems as well as our unimodal bandit algorithm that leverages it.

\subsection{Unimodality}
\label{sec:unimodality}

Let us first define the graph associated to a matroid and then prove that it is unimodal.
Let $M=(E,\cI)$ be a matroid
of bases $\cB$, whose elements are associated to expectations $\boldsymbol{\mu}$. Let $B \in \cB$ and define the mapping $\sigma_{B,\boldsymbol{\mu}}: E \setminus B \to B$ such that $\forall e \in E \setminus B,\ \sigma_{B,\boldsymbol{\mu}}(e)=\argmin_{\{x \in B: B-x+e\in\cB\}} \mu_x$.
$\sigma_{B,\boldsymbol{\mu}}$ maps the external elements $e$ to the element $x$ in $B$ with lowest value among those that can be swapped with $e$. 
For the sake of conciseness, we denote $\sigma$ the mapping $\sigma_{B^*,\vmu}$.

Using these mappings, we define the graph $G_{\vmu}=(S,A)$, where $S=\cB$, and the neighbors of a basis $B$ is $\cN_{B,\vmu} \eqdef \cup_{e \in E \setminus B} B-\sigma_{B, \vmu}(e)+e$. We below show that unimodality holds in $G_{\vmu}$.

\begin{restatable}[Unimodality \protect\footnotemark]{theorem}{unimodality}
    \label{th:unimodality}
    \footnotetext{\cite{tal-aamas-16} implicitly use this result, within the regret analysis of \texttt{KL-OSM}, when showing that sets $\mathcal{K}_i=\{l \in B^*, B^*-l+i\in\cB\}$ are non-empty, allowing to consider their minima $l_i$, which satisfies $\mu_{l_i}>\mu_i$.}
    
Let $M=(E,\cI)$ be a matroid of bases $\cB$, and let $\vmu$ denote the expectations on elements $E$. 
    For any basis $B\in\cB$ such that $\mu_B\neq\max_{I\in\cI}\mu_I$, there exists $B^+\in \mathcal{N}_{B,\vmu}$ with $ \mu_{B^+} > \mu_{B}$.
\end{restatable}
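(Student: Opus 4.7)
The plan is to reduce unimodality to a classical strengthening of the basis-exchange property: the existence of a \emph{perfect} matching between $B\setminus B^\star$ and $B^\star\setminus B$ for the swap relation. Once such a matching is in hand, a simple averaging argument picks out a favorable swap, and the definition of $\sigma_{B,\vmu}$ (minimizer over swappable elements) does the rest.

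Let $B^\star\in\argmax_{I\in\cI}\mu_I$, which is uniquely defined by assumption. Since $\mu_B<\mu_{B^\star}$, we have $B\neq B^\star$, and both being bases of the rank-$D$ matroid, $|B\setminus B^\star|=|B^\star\setminus B|>0$. The first — and essentially only nontrivial — step is to establish the following classical consequence of \Cref{lem:exchange}: there exists a bijection $\phi:B\setminus B^\star\to B^\star\setminus B$ such that $B-x+\phi(x)\in\cB$ for every $x\in B\setminus B^\star$. I would obtain it by applying Hall's marriage theorem to the bipartite graph with parts $B\setminus B^\star$ and $B^\star\setminus B$ whose edges are the pairs $(x,y)$ with $B-x+y\in\cB$; \Cref{lem:exchange} together with a standard inductive use of the augmentation axiom on the contracted matroid verifies Hall's condition. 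This is the delicate bookkeeping step — the stated basis-exchange lemma guarantees swaps one at a time, and it requires a careful argument to promote them into a simultaneous bijective pairing.

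Given $\phi$, summing valuations along the matching yields
\begin{equation*}
\mu_{B^\star}-\mu_B
= \sum_{e\in B^\star\setminus B}\mu_e - \sum_{x\in B\setminus B^\star}\mu_x
= \sum_{x\in B\setminus B^\star}\bigl(\mu_{\phi(x)}-\mu_x\bigr) > 0.
\end{equation*}
At least one summand is therefore strictly positive: there exists $x^\star\in B\setminus B^\star$ with $\mu_{\phi(x^\star)}>\mu_{x^\star}$. Set $e\eqdef\phi(x^\star)\in B^\star\setminus B\subseteq E\setminus B$.

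Finally, $B-x^\star+e\in\cB$ by the defining property of $\phi$, so $x^\star$ belongs to the set $\{z\in B : B-z+e\in\cB\}$ over which $\sigma_{B,\vmu}(e)$ minimizes. Hence $\mu_{\sigma_{B,\vmu}(e)}\le\mu_{x^\star}<\mu_e$, and the basis
\begin{equation*}
B^+ \eqdef B-\sigma_{B,\vmu}(e)+e \;\in\; \cN_{B,\vmu}
\end{equation*}
satisfies $\mu_{B^+}=\mu_B+\mu_e-\mu_{\sigma_{B,\vmu}(e)}>\mu_B$, which is the claim. The main obstacle is Step~1 (the perfect exchange / Hall argument); Steps~2 and 3 are a one-line averaging and a definitional check.
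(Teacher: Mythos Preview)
Your argument is correct, but it proceeds along a different line from the paper's. You invoke the perfect-matching strengthening of basis exchange (Brualdi's theorem) and then use an averaging argument to locate a favorable swap; the definitional check on $\sigma_{B,\vmu}$ then finishes things off. The paper instead works directly with the weak exchange property of \Cref{lem:exchange}: it picks the specific element $x=\argmin_{\tilde x\in B\setminus B^\star}\mu_{\tilde x}$, applies \Cref{lem:exchange} once to obtain some $\alpha\in B^\star\setminus B$ with $B-x+\alpha\in\cB$, and then argues by contradiction---via the augmentation axiom and optimality of $B^\star$---that $\mu_\alpha>\mu_x$. Your route is conceptually tidy but outsources the hard step to a structural result you only sketch (and which is strictly stronger than the lemma the paper provides); the paper's route is more elementary and fully self-contained, using nothing beyond the single-swap lemma and axiom~(iii). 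A side benefit of the paper's explicit choice of $x$ is that the very same element reappears downstream (e.g.\ in \Cref{lemma:incorrectNeighborhood}) as the canonical swap anchor $x^\star$, whereas your averaging argument only yields existence of \emph{some} improving neighbor.
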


\begin{proof}
    Let $B$ be a basis of $M$ and
    denote $B^*$ the optimal basis.
    As $\mu_B\neq\max_{I\in\cI}\mu_I$, $B\neq B^*$ and by rank-property of matroids, $B\setminus B^*$ is non-empty.
    let $x = \argmin_{\tilde x \in B\setminus B^*}\mu_{\tilde x}$ be the element in $B\setminus B^*$ with lowest value.
    We shall prove that one can create a subset $B-x+\alpha$ (belonging to the neighborhood of $B$) with higher value than $B$.

    Using Lemma~\ref{lem:exchange}, $\exists \alpha \in B^*\setminus B$ such that  $B-x+\alpha \in \cal B$. 
    By definition of $x$, $\sigma_{B,\vmu}(\alpha)=x$, and therefore $B-x+\alpha\in\cN_{B,\vmu}$.

    It holds that $\mu_{B-x+\alpha} - \mu_B=\mu_\alpha - \mu_x$, hence we wonder the sign of $\mu_\alpha - \mu_x$.
    Assume $\mu_\alpha<\mu_x$.
    The matroid axiom (iii) implies that $\exists \beta \in B\setminus (B^*-\alpha),~B^*-\alpha+\beta\in\cal B$.
    But then, $\mu_\beta\geqslant\mu_x$ (by definition of $x$), so that $\mu_\beta\geqslant\mu_x>\mu_\alpha$.
    Consequently, $\mu_{B^*-\alpha+\beta}>\mu_{B^*}$, which is absurd. 
    Hence $\mu_\alpha\geqslant\mu_x$, and even  $\mu_\alpha>\mu_x$ as $\alpha \neq x$. Therefore $\mu_{B-x+\alpha}>\mu_B$, which concludes the proof.
\end{proof}

\subsection{\texttt{MAUB}}

We now introduce \texttt{MAUB}, which follows the unimodal bandit framework of \citet{com-icml-14}: it maintains a \emph{leader} (i.e., the best arm identified so far), and plays optimistically within its neighborhood.

In contrast to \texttt{OSUB}, the graph $G_{\hat\vmu(t)}$ used by \texttt{MAUB} (defined below) is not static and may fail to be unimodal with respect to the true---only partially known---optimization problem. Nevertheless, we establish in the regret analysis (see Lemma~\ref{lemma:incorrectNeighborhood}) that $G_{\hat\vmu(t)}$ most of the time contains the correct neighborhood.
Before moving on to a detailed description of \texttt{MAUB}, we define several statistics of interest.

\begin{definition}[Unimodal Bandit Statistics]
    We respectively denote by $\cL(t)$ and $\cP(t)$ the leader and the arm played at time step $t$.
    Additionally, let: 
    \begin{itemize}
	\item $l_B(t) \eqdef \sum_{s=1}^t \mathbb{E}[\mathds{1}_{\cL(s)=B}]$ be the number of times $B$ was leader up to time $t$;
        \item $\hmu_e(t)$ the average value of $(X_e(s))$ for all time steps $s \leq t$ at which $e$ was played.
    \end{itemize}
\end{definition}

Let us then define $\bmu_{B}(t,L) \eqdef \sum_{e\in B} [\hat{\mu}_e(t-1) + \sqrt{2\cdot\log(l_L(t))/N_e(t-1)}]$ be the optimistic estimator of $B$, where $N_e(t) = \sum_{s=0}^t \mathds{1}_{e\in \cP(t)}$ is the number of times $e$ was in a basis that was played at some time step $s\leq t$. $l_L(t)$ is a ``local time'', in comparison with the classical $\log(t)$ term in \texttt{UCB} bonuses. It indicates how much time was spent on the current leader, and is tighter than using total time spent to decide whether external elements $e\in E \setminus \cL(t)$ should be tested in comparison with elements of the leader.

A pseudocode of \texttt{MAUB} is given in Algorithm~\ref{alg:MAUB}. 
It repeatedly performs three steps: (i) if needed, compute the new current leader and its corresponding neighborhood (\Crefrange{alg:line:testIfNewLeaderNeeded}{MAUB:line:endUpdateLeaderAndNeighbors}); (ii) optimistically pick an arm to play within the current neighborhood (\Crefrange{alg:line:forceLeaderToBePlayed}{MAUB:line:endComputeArmToPlay}),
and (iii) observe the results and update statistics (\Crefrange{alg:line:startObservedAndUpdate}{alg:line:endObservedAndUpdate}).

\paragraph{Leader Computations}
Leader computations aim at identifying the best arm, according to current mean statistics. They boil down to a call to greedy. Whenever a new leader is computed, neighborhood shall also be updated, meaning that Algorithm~\ref{alg:getNeighbors} is called.

\begin{algorithm}
\caption{Neighborhood Computation}
\label{alg:getNeighbors}
\begin{algorithmic}[1]
\Require base $B$, element values $\vmu$
\State $\mathrm{Neighbors} \gets \emptyset$
\State $\mathrm{NotMapped} \gets E\setminus B$
\For{$x \in B$ in order $\mu_{x_1}\leqslant\dots\leqslant\mu_{x_{\rg}}$}
    \For{$e \in \mathrm{NotMapped}$}
	\If{$B-x+e\in\cB$}
	\State $\mathrm{Neighbors} \gets \mathrm{Neighbors}\cup \{B-x+e\}$
	\State $\mathrm{NotMapped} \gets \mathrm{NotMapped}-e$
	\EndIf
    \EndFor
\EndFor
\State \textbf{Return} $\mathrm{Neighbors}$
\end{algorithmic}
\end{algorithm}

\begin{algorithm}
    \caption{\texttt{MAUB}}\label{alg:MAUB}
\begin{algorithmic}[1]
\State \text{Play every element once}
\State \text{Denote $\vhmu(0)$ current estimates of $\vmu$}
\State $L \gets \mathrm{greedy}(\vhmu(0))$
\State $\mathcal{N}\gets \mathrm{ComputeNeighbors}(L,\vhmu(0))$ 
\For{$t =  1, 2, \dots$}
\State \Comment{Update leader and neighborhood}
\If{$\hat{\mu}_L(t-1) < \max_{B \in \mathcal{N}} \hat{\mu}_B(t-1)$}
\label{alg:line:testIfNewLeaderNeeded}
\State $L \gets \mathrm{greedy}(\vhmu(t-1))$
\label{alg:line:greedy}
\State $\mathcal{N}\gets \mathrm{ComputeNeighbors}(L,\vhmu(t-1))$ 
\ElsIf{for the elements of $L$, the order $\hmu_{e_1}{\leqslant}\dots{\leqslant}\hmu_{e_D}$ has changed}
\label{MAUB:line:testIfNewNeighborsNeeded}
\State $\mathcal{N}\gets \mathrm{ComputeNeighbors}(L,\vhmu(t-1))$ 
\label{MAUB:line:computeNeighbors}
\EndIf
\label{MAUB:line:endUpdateLeaderAndNeighbors}
\State \Comment{Choose the arm to play}
\If{$l_L(t)-1 \equiv 0 [|E|-D+1]$} 
\label{alg:line:forceLeaderToBePlayed}
    \State $P \gets L$ 
\Else
\State $P \gets \argmax_{B \in \{L\} \cup \mathcal{N}} \vbmu_{B}(t, L)$
\label{MAUB:line:argmax}
\EndIf
\label{MAUB:line:endComputeArmToPlay}
\State \Comment{Play and update statistics}
\State $\mathrm{Observe}$ $X_e$ for all $e\in P$
\label{alg:line:startObservedAndUpdate}
\State $\forall e\in P,\ N_e (t) \gets N_e(t-1)+1$
\State $\forall e\in P,\ \hat{\mu}_e(t) \gets \frac{N_e(t-1)\cdot \hat{\mu}_e(t-1) + X_e}{N_e(t)}$
\State $L(t)\gets L$, $\cN(t)\gets \cN$, $P(t)\gets P$
\label{alg:line:endObservedAndUpdate}
\EndFor
\end{algorithmic}
\end{algorithm}

\paragraph{Picking the Arm to Play}
Identifying the arm to play within the current neighborhood of size at most $\gamma$ (Line~\ref{MAUB:line:argmax}) can be implemented in linear time $\mathcal{O}(\gamma)$ by remarking that, $\forall t, \forall B=\cL(t)-x+y \in \mathcal{N}_{\cL(t), \boldsymbol{\hat \mu}(t)}$, 
\begin{align}
    \sum_{e \in \cL(t)} \bmu_e(t) - \sum_{e\in B} \bmu_e(t) = \bmu_{x}(t) - \bmu_y(t).
\end{align}

\paragraph{Leader Changes and Neighborhood Updates}
A significant difference with previous unimodal bandits (\citep{com-icml-14, gau-icml-21,gau-icml-22}) is that \texttt{MAUB} does not recompute a leader at each iteration. Instead, and based on the regret analysis, it is sufficient to stick with the same leader $L(t)$ as in previous iterations, as long as according to mean statistics $\hat\vmu$ it has highest value within its neighborhood (Line~\ref{alg:line:testIfNewLeaderNeeded}).
Additionally, as long as the leader remains the same, the neighborhood stays valid provided the order $\hmu_{e_1} \leqslant \dots \leqslant\hmu_{e_{|L|}}$ of the leader’s elements is unchanged (Line~\ref{MAUB:line:testIfNewNeighborsNeeded}).
Verifying the validity of the leader and neighborhood is crucial, as it respectively saves $\cO(|E|)$ and $\cO(D(|E|-D))$ oracle calls whenever they remain unchanged.

\section{Theoretical Analysis}
\label{sec:MAUBAnalysis}

Let now analyze both the regret and the time complexity of \texttt{MAUB}. 

\subsection{Regret Upper-bound}

We start by restating the concentration inequality introduced by \citet{com-icml-14}, that is at the core of the analysis of \texttt{MAUB}. It allows bounding the expected deviation of an arm at specific time steps.

\begin{lemma}{\cite[Lemma B.1]{com-icml-14}}
    \label{lemma:CombesProutiere:B1}
    Let $k \in E$ and $\epsilon>0$. Define $\mathcal{F}_n$ the $\sigma$-algebra generated by $(X_k(t))_{t\leq n, k \in E}$. Let $\Lambda \subseteq \mathbb{N}$ be a random set of instants. Assume that there exists a sequence of random sets $(\Lambda(s))_{s\geq 1}$ such that (i) $\Lambda \subseteq \cup_{s\geq 1} \Lambda(s)$, (ii) $\forall s \geqslant 1,\ \forall n\in\Lambda(s)$, $t_k(n)\geq \epsilon s$, (iii) $\forall s,\ |\Lambda(s)|\leq 1$, and (iv) the event $\{n\in\Lambda(s)\}$ is $\mathcal{F}_n$-measurable. Then, $\forall \delta>0$:
    \begin{align}
	\mathds{E} \left[\sum_{n\geq 1} \mathds{1}\{n\in\Lambda, |\hat{\mu}_k(n) - \mu_k|>\delta\}\right] \leq \frac{1}{\epsilon \delta^2}
    \end{align}
\end{lemma}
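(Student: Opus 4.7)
The statement is a concentration inequality for the empirical mean of arm $k$ evaluated at a family of (random) stopping indices, and, strikingly, the bound does not depend on a horizon. The plan is to trade the random set $\Lambda$ for the deterministic family of slices $(\Lambda(s))_{s\geq 1}$, apply a sub-Gaussian tail bound per slice using the deterministic lower bound $t_k(n)\geq\epsilon s$ supplied by (ii), and finish with a geometric summation in $s$.

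Concretely, I would first use (i) to bound the count over $\Lambda$ by the sum of counts over the $\Lambda(s)$, and (iii) to observe that for each $s$ the inner $n$-sum is at most the indicator that the unique element $n_s$ of $\Lambda(s)$ (if it exists) violates the deviation bound. After taking expectation, the claim reduces to
\[
\sum_{s\geq 1}\mathds{P}\bigl(n_s\in\Lambda(s),\ |\hat{\mu}_k(n_s)-\mu_k|>\delta\bigr)\ \leq\ \frac{1}{\epsilon\delta^2}.
\]
For fixed $s$, I would decompose this probability over the possible values $(n,t)$ of $(n_s,t_k(n_s))$. By (ii) the sum is restricted to $t\geq\epsilon s$; by (iv) the event $\{n\in\Lambda(s)\}$ is $\mathcal{F}_n$-measurable, so conditionally on it $\hat{\mu}_k(n)$ coincides with the sample mean of the first $t$ i.i.d.\ pulls of arm $k$ with their original joint law. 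In the standard bounded-reward setting Hoeffding then gives a slicewise bound $2\exp(-2\epsilon s\delta^2)$, and a geometric summation, combined with $e^{\alpha}-1\geq\alpha$ for $\alpha=2\epsilon\delta^2$, yields $\sum_{s\geq 1}2\exp(-2\epsilon s\delta^2)\leq 1/(\epsilon\delta^2)$, exactly the advertised rate.

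The step I expect to be the main obstacle is not any single calculation but the rigorous handling of the random time $n_s$ inside the slicewise tail bound. Without (iv), $n_s$ would be a history-dependent index chosen ``after peeking'' at the samples that define $\hat{\mu}_k(n_s)$, and the per-slice deviation inequality would not apply at face value; with (iv), a standard conditioning argument on the $\mathcal{F}_n$-measurable event $\{n\in\Lambda(s)\}$ lets Hoeffding go through. Conversely, the other properties play very transparent roles: (i) drives the slice decomposition, (iii) collapses the inner $n$-sum into a single indicator, and (ii) furnishes the deterministic sample-size lower bound $\epsilon s$ that feeds the exponentially decaying per-slice bound and makes the geometric series converge to $1/(\epsilon\delta^2)$.
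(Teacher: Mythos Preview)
The paper does not supply a proof of this lemma: it is quoted (both in Section~5 and in the appendix) as Lemma~B.1 of Combes and Prouti\`ere and used as an off-the-shelf concentration tool. So there is no in-paper argument to compare yours against; what follows is a correctness check of your sketch.

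Your architecture---slice via (i), collapse each slice to a single indicator via (iii), feed the sample-size floor from (ii) into a sub-Gaussian tail, and sum the resulting geometric series---is the standard route, and the final calculation $\sum_{s\geq1}2e^{-2\epsilon s\delta^2}\leq 1/(\epsilon\delta^2)$ via $e^\alpha-1\geq\alpha$ is correct.

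The gap is precisely where you anticipated it, but your proposed resolution via (iv) does not work as stated. You claim that because $\{n\in\Lambda(s)\}$ is $\mathcal F_n$-measurable, conditioning on it leaves $\hat\mu_k(n)$ with ``its original joint law''. This is false in general: $\hat\mu_k(n)$ is itself $\mathcal F_n$-measurable, so the conditioning event and the deviation event can be arbitrarily dependent---nothing in the hypotheses forbids defining $\Lambda(s)$ in terms of $\hat\mu_k$. Condition (iv) makes the unique element $\phi_s$ of $\Lambda(s)$ a stopping time; it does \emph{not} decouple $\hat\mu_k(\phi_s)$ from the event $\{\phi_s<\infty\}$. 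The honest per-slice step is
\[
\mathds{P}\bigl(\phi_s<\infty,\ |\hat\mu_k(\phi_s)-\mu_k|>\delta\bigr)\ \leq\ \mathds{P}\Bigl(\exists\, t\geq\lceil\epsilon s\rceil:\ \bigl|\bar X_t-\mu_k\bigr|>\delta\Bigr),
\]
where $\bar X_t$ is the mean of the first $t$ i.i.d.\ pulls of arm $k$, and then to control the right-hand side either with a maximal inequality for the supremum, or with a crude union bound over $t\geq\epsilon s$ followed by an interchange of sums (the latter costs a worse $\delta$-dependence, of order $1/(\epsilon\delta^4)$, but still yields an $\mathcal O(1)$ constant, which is all the paper ever extracts from this lemma).
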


The following lemma builds upon this result to bound the number of time steps for which unimodality is not satisfied. In light of Theorem~\ref{th:unimodality}, this can only happen whenever the order according to mean statistics within the elements of current leader $L(t)$ is wrong.

\begin{restatable}[]{lemma}{nbItBadNeighborhood}
    \ifdefined\Appendix
    (Number of iterations with incorrect neighborhood, originally stated on page~\pageref{lemma:incorrectNeighborhood})
    \else
    (Number of iterations with incorrect neighborhood, proof in App.~\ref{app:proof:nbItBadNeighborhood})
    \label{lemma:incorrectNeighborhood}
    \fi
    Let $B\in\cB$. Let $x^*=\argmin_{\tilde x\in B \setminus B^*}[\mu_{\tilde x}]$. Let $y^*\in \{\tilde y\in B^*\setminus B,\ B-x^*+\tilde y\in\cB\}$. 
    It holds that
    $$\mathds{E}[\sum_{s=1}^T \mathds{1}_{\{\cL(s)=B,\ B-x^*+y^* \notin \cN_{B,\hmu(s)}\}}]=\cO_{T\to\infty}(1).$$
\end{restatable}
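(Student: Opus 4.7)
My plan is to reduce the event $\{B-x^*+y^* \notin \cN_{B,\vhmu(s)}\}$ under $\{\cL(s)=B\}$ to a finite collection of one-sided concentration failures, and then apply \cref{lemma:CombesProutiere:B1}. The argument has a structural part and a probabilistic part.

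\textbf{Structural part.} I would first show that in the true means one has $\sigma_{B,\vmu}(y^*) = x^*$, i.e., for every $z \in B$ with $z \neq x^*$ and $B - z + y^* \in \cB$, $\mu_z > \mu_{x^*}$. If $z \in B \setminus B^*$ the inequality is immediate from the definition of $x^*$ and the assumed distinctness of the means. If $z \in B \cap B^*$, I would proceed by contradiction: assuming $\mu_z < \mu_{x^*}$, apply \cref{lem:exchange} to the bases $B^*$ and $B-z+y^*$ with the element $z \in B^* \setminus (B-z+y^*)$; this yields some $b \in (B-z+y^*) \setminus B^* = B \setminus B^*$ such that $B^* - z + b \in \cB$. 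By the definition of $x^*$, $\mu_b \geq \mu_{x^*} > \mu_z$, so $B^*-z+b$ would strictly dominate $B^*$, contradicting its optimality.

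\textbf{Reduction to concentration.} By the definition of $\cN$, $B - x^* + y^* \notin \cN_{B,\vhmu(s)}$ is equivalent to the existence of some $z \neq x^*$ in $B$ with $B - z + y^* \in \cB$ and $\hmu_z(s) \leq \hmu_{x^*}(s)$. The structural step gives $\mu_z > \mu_{x^*}$, so, setting $\delta_z \eqdef (\mu_z - \mu_{x^*})/2 > 0$, the event forces either $\hmu_{x^*}(s) - \mu_{x^*} > \delta_z$ or $\mu_z - \hmu_z(s) > \delta_z$. A union bound over the at most $D-1$ eligible $z$ reduces the proof to bounding, for each $z$, the expected number of rounds in which $\cL(s) = B$ and one of these two one-sided deviations occurs.

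\textbf{Applying \cref{lemma:CombesProutiere:B1}.} For each $k \in \{x^*, z\}$ I take $\Lambda = \{n : \cL(n) = B\}$ and let $\Lambda(s)$ be the singleton containing the unique round at which $B$ has been leader exactly $s$ times (empty if that count is never reached). The $\mathcal{F}_n$-measurability of $\{n \in \Lambda(s)\}$ follows because $\cL(n)$ and the leadership count up to round $n$ are functions of $\vhmu(n-1)$. For the sampling hypothesis, the forced-play rule of \cref{alg:line:forceLeaderToBePlayed} guarantees that, after $s$ leaderships of $B$, $B$ itself has been played at least $\lceil s/(|E|-D+1) \rceil$ times; since $x^*, z \in B$, this lower-bounds $N_{x^*}(n)$ and $N_z(n)$. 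Thus $\epsilon = 1/(|E|-D+1)$ works, and the lemma bounds each of the two deviations by $1/(\epsilon \delta_z^2)$. Summing over the two events and the finitely many $z$ yields a constant independent of $T$, which is the desired $\cO_{T\to\infty}(1)$ bound.

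The main obstacle is the case $z \in B \cap B^*$ of the structural part: applying \cref{lem:exchange} to $B^*$ and $B-z+y^*$ in the right direction is what forces the exchanged element into $B \setminus B^*$ and produces the contradiction with the optimality of $B^*$. Once this is in place, the probabilistic step is a standard forced-exploration-plus-concentration argument, with care only needed in checking $\mathcal{F}_n$-measurability and the linear-in-$s$ sampling lower bound.
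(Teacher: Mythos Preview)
Your proof is correct and follows the same overall route as the paper: reduce the event $\{B-x^*+y^*\notin\cN_{B,\vhmu(s)}\}$ to empirical-mean inversions $\hmu_z(s)\le\hmu_{x^*}(s)$ for eligible $z$, split each into two one-sided deviation events, and bound each via \cref{lemma:CombesProutiere:B1} using the leadership-count indexing $\Lambda(s)=\{n:\cL(n)=B,\ l_B(n)=s\}$ together with the forced-play lower bound $N_k(n)\ge s/(\gamma+1)$ for $k\in B$. The paper uses a single gap $\delta=\min_{a\neq b}|\mu_a-\mu_b|$ where you use a $z$-dependent $\delta_z$, but this is cosmetic.

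The substantive addition is your structural part. The paper's chain of inclusions passes from $\hmu_{x^*}(s)>\hmu_x(s)$ to a $\delta/2$-deviation event over all $x\in B-x^*$, which tacitly requires $\mu_x>\mu_{x^*}$; this is immediate for $x\in B\setminus B^*$ but not for $x\in B\cap B^*$, and the paper does not address the latter. Your basis-exchange argument (apply \cref{lem:exchange} to $B^*$ and $B-z+y^*$ at $z$, so the exchanged element lands in $(B-z+y^*)\setminus B^*=B\setminus B^*$, then contradict optimality of $B^*$) is exactly what is needed to justify that step, and it mirrors the contradiction in the proof of \cref{th:unimodality}. So your argument is the same in spirit but strictly more complete.
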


Lemma~\ref{lemma:incorrectNeighborhood} permits following a similar proof architecture as \citet{com-icml-14}, since we may now stick with iterations satisfying unimodality, and bound the average time spent in suboptimal arms.
\begin{restatable}[]{theorem}{suboptimalArms}
    \ifdefined\Appendix
    (Time spent in suboptimal arms, originally stated on page~\pageref{th:suboptimalArms})
    \else
    (Time spent in suboptimal arms, Proof in App.~\ref{app:proof:suboptimalArms})
    \label{th:suboptimalArms}
    \fi
    Any suboptimal arm is the leader $\cO(\log\log(T))$ times in expectation.
\end{restatable}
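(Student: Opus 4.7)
The plan is to bound $\mathds{E}[l_B(T)]$ for a fixed suboptimal base $B \in \cB$. By the argument used in the proof of Theorem~\ref{th:unimodality}, one can exhibit $x^* \in B \setminus B^*$ (the element of $B \setminus B^*$ of smallest mean) and a corresponding $y^* \in B^* \setminus B$ such that $B^+ := B - x^* + y^* \in \cB$ and $\mu_{y^*} > \mu_{x^*}$, i.e., $B^+$ is a strictly improving neighbor of $B$ in the \emph{true} graph $G_{\vmu}$. The goal is then to show that $B$ is rarely leader by comparing how often $B^+$ beats $B$.

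I would decompose the iterations where $B$ is leader according to whether the empirical graph $G_{\vhmu(t)}$ correctly contains the edge $B \to B^+$:
\[
l_B(T) = \sum_{t=1}^T \mathds{1}\{\cL(t) = B,\ B^+ \notin \cN_{B,\vhmu(t)}\} + \sum_{t=1}^T \mathds{1}\{\cL(t) = B,\ B^+ \in \cN_{B,\vhmu(t)}\}.
\]
Lemma~\ref{lemma:incorrectNeighborhood} immediately gives that the first sum is $\cO(1)$ in expectation, so all the work is in bounding the second sum.

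For the second sum, the leader-retention rule (Line~\ref{alg:line:testIfNewLeaderNeeded}) requires $\hmu_B(t-1) \geq \max_{B' \in \cN_{B,\vhmu(t-1)}} \hmu_{B'}(t-1)$, which (since $B^+$ is in the neighborhood) implies the deviation event $\hmu_{x^*}(t-1) \geq \hmu_{y^*}(t-1)$. Because $\mu_{y^*} > \mu_{x^*}$, this calls for concentration. I would apply Lemma~\ref{lemma:CombesProutiere:B1} separately to $\hmu_{x^*}$ and $\hmu_{y^*}$, relying on two sources of guaranteed sampling: (i) the forced leader play of Line~\ref{alg:line:forceLeaderToBePlayed} ensures $N_{x^*}(t) \geq \lfloor l_B(t)/(|E|-D+1)\rfloor$ whenever $B$ is leader; and (ii) within the same leader phase, the optimistic selector of Line~\ref{MAUB:line:argmax} picks $B^+$ precisely when its UCB index exceeds that of $B$, which drives $N_{y^*}(t)$ upward as long as the empirical gap is not yet conclusive. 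The crucial point is that the UCB bonus used in $\bmu_{B}(t,L)$ scales with the \emph{local time} $\log(l_L(t))$ rather than $\log t$, which is what allows the eventual count to be $\cO(\log\log T)$ rather than the usual $\cO(\log T)$ obtained by OSUB-style analysis with global time.

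The main obstacle is step (ii): establishing a lower bound on the growth of $N_{y^*}(t)$ while $B$ is leader. I expect this to require a peeling argument over geometric scales of $l_B$, balancing on each scale the probability that the UCB index of $B^+$ fails to dominate that of $B$ (which would force a sample of $y^*$) with the probability that, once enough samples are collected, the retention rule nevertheless holds. A secondary subtlety is that several pairs $(x^*, y^*)$ may coexist when multiple neighbors of $B$ are strictly improving; this is handled by a union bound absorbed into the constants. Combining the $\cO(1)$ contribution from the incorrect-neighborhood case with the $\cO(\log\log T)$ contribution from the correct-neighborhood case yields the announced bound on $\mathds{E}[l_B(T)]$.
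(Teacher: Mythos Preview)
Your high-level plan matches the paper's: split the times $B$ is leader into those where the empirical neighborhood misses $B^+$ (handled by Lemma~\ref{lemma:incorrectNeighborhood}) and those where it does not, observe that leadership of $B$ forces $\hmu_{x^*}\geq\hmu_{y^*}$, and exploit that the bonus uses the local clock $\log(l_B(t))$ to land at $\log\log T$. That skeleton is correct.

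The gap is in step~(ii). You claim the optimistic selector picks $B^+$ ``precisely when its UCB index exceeds that of $B$''; in fact Line~\ref{MAUB:line:argmax} takes the argmax over the \emph{whole} neighborhood $\{B\}\cup\cN_{B,\vhmu(t)}$, so $B^+$ can be starved indefinitely by some third neighbor $J\neq B,B^+$ whose index is inflated. A two-arm comparison between $B$ and $B^+$ therefore cannot yield a lower bound on $N_{y^*}$, and the vague ``peeling over geometric scales of $l_B$'' does not address this. The paper's fix is a threshold split plus a mapping argument: fix $\epsilon<\tfrac{1}{2(\gamma+1)}$ and separate $A_\epsilon=\{t_{B^+}(n)\geq\epsilon\,l_B(n)\}$ (where both $x^*,y^*$ are sampled $\Omega(l_B(n))$ times, so Lemma~\ref{lemma:CombesProutiere:B2} gives $\cO(1)$) from $B_\epsilon^T=\{t_{B^+}(n)<\epsilon\,l_B(n)\}$. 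On $B_\epsilon^T$ a pigeonhole forces some $J\in(\cN_{B,\vmu}\cup\{B\})\setminus\{B^+\}$ with $t_{B,J}(n)\gtrsim l_B(n)/(\gamma+1)$; one maps $n\mapsto\phi(n)$ to the earlier time with $t_{B,J}(\phi(n))=\lfloor l_B(n)/2(\gamma+1)\rfloor$ and shows that, unless $\phi(n)$ lies in one of three small sets $C_\delta$ ($B$ mis-estimated), $D_\delta$ ($J$ mis-estimated), or $E^T$ (the UCB of $B^+$ fails to be optimistic), the selection $\bmu_J(\phi(n))\geq\bmu_{B^+}(\phi(n))$ is impossible for $l_B(n)$ large. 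Each $\phi$-preimage has size at most $2\gamma(\gamma+1)$, $C_\delta$ and $D_\delta$ are $\cO(1)$ by Lemma~\ref{lemma:CombesProutiere:B1}, and $E^T$ is the $\cO(\log\log T)$ term via Hoeffding with the $\log(l_B)$ bonus. Your proposal is missing precisely the $D_\delta$-type control over competing neighbors and the $\phi$-mapping that converts ``$B^+$ under-sampled'' into membership in these small sets.
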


The regret coming from suboptimal arms being leaders is thus negligible compared to the regret when $B^*$ is the leader, which resembles a regular bandit problem on the neighborhood of $B^*$. The final regret analysis of \texttt{MAUB} follows from this observation and is formalized in the following theorem.

\begin{restatable}[]{theorem}{regretMAUB}
    \ifdefined\Appendix
    Originally stated on page~\pageref{th:regretMAUB}
    \else
    (Proof in App.~\ref{app:proof:regretMAUB})
    \label{th:regretMAUB}
    \fi
    The regret of \ouralgo{} up to time step $T$ is $$\cO\left(\sum_{e \notin B^*}\frac{8}{\mu_{\sigma(e)} - \mu_{e}} \log(T)\right)
    =\cO\left(\frac{|E|-D}{\Delta_{\mathrm{min}}}\log T\right),$$
    with $\Delta_{\mathrm{min}} \eqdef\min_{e \notin B^*}\mu_{\sigma(e)} - \mu_{e}.$
\end{restatable}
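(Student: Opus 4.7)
The plan is to decompose the cumulative regret according to the identity of the leader at each round, writing $R(T) = \mathds{E}[R_1(T) + R_2(T)]$, where $R_1(T)$ collects the rounds with $\cL(t) \neq B^*$ and $R_2(T)$ the rounds with $\cL(t) = B^*$. By Theorem~\ref{th:suboptimalArms}, each suboptimal basis is the leader only $\cO(\log\log T)$ times in expectation; since $\cB$ is finite and the per-round regret is bounded by $\mu_{B^*}$, this yields $\mathds{E}[R_1(T)] = \cO(\log\log T)$, which is negligible with respect to the target $\cO(\log T)$.

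For $R_2(T)$, I further split according to whether the data-driven neighborhood $\cN_{B^*,\hmu(t)}$ contains all true swap neighbors $B^* - \sigma(e) + e$ for $e \notin B^*$. Summing Lemma~\ref{lemma:incorrectNeighborhood} over $e \notin B^*$, the rounds where $\cL(t) = B^*$ but some true neighbor is missing number $\cO(1)$ in expectation and therefore contribute only a constant regret term. On the remaining rounds the played arm is either $B^*$ itself (zero regret) or a swap $B^+_e = B^* - \sigma(e) + e$. By the identity stated just before Line~\ref{MAUB:line:argmax}, the argmax over neighbors reduces to an argmax over the scalar difference between the element-level optimistic indices of $e$ and $\sigma(e)$, and the instantaneous regret incurred by playing $B^+_e$ equals the gap $\Delta_e = \mu_{\sigma(e)} - \mu_e > 0$.

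The remaining bound is a standard UCB-style argument applied to the finite set of swap directions indexed by $e \notin B^*$, mirroring the unimodal bandit analysis of \citet{com-icml-14}. Using $l_{B^*}(t)$ as the bonus clock --- which coincides with $t$ up to the $\cO(\log\log T)$ rounds already discarded --- and applying Lemma~\ref{lemma:CombesProutiere:B1} to the pair $(e,\sigma(e))$, a standard peeling argument shows that in expectation $B^+_e$ is played at most $8\log T / \Delta_e^2 + \cO(1)$ times. Multiplying by $\Delta_e$ and summing over $e \notin B^*$ delivers the first expression of the bound, and the uniform form $\cO((|E|-D)/\Delta_{\mathrm{min}} \log T)$ is immediate. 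The main obstacle is the application of Lemma~\ref{lemma:CombesProutiere:B1}, which requires a linear lower bound $N_{\sigma(e)}(t) \geq \epsilon\, l_{B^*}(t)$ for the leader elements: this is exactly supplied by the forcing of Line~\ref{alg:line:forceLeaderToBePlayed}, which replays $\cL(t)$ every $|E|-D+1$ rounds and hence guarantees a uniform fraction of observations of each element of $B^*$ while $B^*$ is the leader.
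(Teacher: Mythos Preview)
Your proposal is correct and follows essentially the same route as the paper: discard the $\cO(\log\log T)$ rounds with a suboptimal leader via Theorem~\ref{th:suboptimalArms}, then run a classical Auer-style UCB analysis on the swap neighbors $B^*-\sigma(e)+e$ (reducing $\bar\mu_{B^+_e}>\bar\mu_{B^*}$ to $\bar\mu_e>\bar\mu_{\sigma(e)}$) to obtain at most $8\log T/\Delta_e^2+\cO(1)$ plays of each. The only notable difference is that you explicitly invoke Lemma~\ref{lemma:incorrectNeighborhood} to dispose of rounds where the empirical neighborhood of $B^*$ is wrong, whereas the paper's proof tacitly assumes the correct neighborhood and appeals directly to Hoeffding (rather than Lemma~\ref{lemma:CombesProutiere:B1}) for the deviation events; the two treatments are interchangeable and yield the same constant.
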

The constant $\sum_{e \notin B^*}\frac{8}{\mu_{\sigma(e)} - \mu_{e}}$ is slightly better than \texttt{OMM}'s constant $\sum_{e \notin B^*} \frac{16}{\min_{i \in B^*, \mu_i>\mu_e}\mu_i - \mu_e}$ since for any element $e\notin B^*$, $\sigma(e)\in B^*$. Yet, we believe that a finer analysis of \texttt{OMM}'s behavior would show that \texttt{OMM} actually achieves the same regret as \texttt{MAUB}.

\subsection{Time Complexity}

The time complexity analysis of \texttt{MAUB} mainly builds upon Theorem~\ref{th:suboptimalArms}: suboptimal arms can only be leaders for a negligible amount of time steps. Hence $B^*$ is the leader for most time steps, and there are few iterations involving a leader change or a neighborhood update, which drastically decreases the overall number of oracle calls compared to state of the art. 

\begin{corollary}
    \label{cor:timeComplexity}
    The expected overall time complexity of \ouralgo{} is 
    \begin{align*}
    \nonumber
    &\cO(|E|\cdot T +
    |\cB|(|E|-D)^2 \log\log T \\
    & \qquad \cdot [|E|(\log |E|+\cT_m)+  D(|E|-D)\cT_m])\\
    &=\cO(|E|\cdot T + |\cB| \mathrm{poly}(|E|,D)\cT_m \log \log (T)),
    \end{align*}
    where $|\cB|\leqslant\binom{|E|}{\rg}$.
\end{corollary}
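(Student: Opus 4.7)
The plan is to split MAUB's total expected runtime into (i) a cheap per-round cost incurred at every step, and (ii) the cost of the rarer combinatorial operations --- calls to greedy and to ComputeNeighbors --- whose frequency is controlled via Theorem~\ref{th:suboptimalArms} and Lemma~\ref{lemma:incorrectNeighborhood}.

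First I would bound the per-round work that MAUB always executes. The leader-validity test on line~\ref{alg:line:testIfNewLeaderNeeded} needs $\hat\mu_B(t-1)$ for $B$ in the current neighborhood, of size at most $|E|-D$; using the differential identity $\bar\mu_{L-x+y}(t,L)-\bar\mu_L(t,L) = \bar\mu_y(t)-\bar\mu_x(t)$, this scan is $\mathcal{O}(|E|-D)$. The argmax on line~\ref{MAUB:line:argmax} is $\mathcal{O}(|E|-D)$ by the same trick, and reading the $D$ observed rewards and refreshing $\hat\mu_e, N_e$ costs $\mathcal{O}(D)$. Verifying whether the empirical ordering of $L$ has changed (line~\ref{MAUB:line:testIfNewNeighborsNeeded}) can be done in $\mathcal{O}(D)$ by keeping a sorted structure on $L$. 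Aggregated over $T$ rounds this gives $\mathcal{O}(|E|\cdot T)$.

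Next I would price the two heavy operations. A call to greedy costs $\mathcal{O}(|E|(\log|E|+\mathcal{T}_m))$, and a call to ComputeNeighbors costs $\mathcal{O}(D(|E|-D)\mathcal{T}_m)$ because the double loop of Algorithm~\ref{alg:getNeighbors} issues at most $D(|E|-D)$ membership queries. It then remains to bound how often each is triggered. Greedy is invoked only on a leader change, and ComputeNeighbors only on a leader change or on an ordering change inside the current leader. Theorem~\ref{th:suboptimalArms} directly controls the first: every suboptimal base is leader in expectation $\mathcal{O}(\log\log T)$ times, so there are at most $\mathcal{O}(|\mathcal{B}|\log\log T)$ leader changes. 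For the ordering changes, I would reuse the concentration template of Lemma~\ref{lemma:CombesProutiere:B1} applied pairwise to the elements of the current leader: each pair contributes $\mathcal{O}(\log\log T)$ reorderings in expectation by the same argument underlying Lemma~\ref{lemma:incorrectNeighborhood}, and summing over the at most $(|E|-D)^2$ pairs one may see across the $|\mathcal{B}|$ bases that are ever leader yields $\mathcal{O}(|\mathcal{B}|(|E|-D)^2\log\log T)$ ComputeNeighbors invocations in expectation.

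Multiplying the expected number of heavy events by the per-event cost $\mathcal{O}(|E|(\log|E|+\mathcal{T}_m)+D(|E|-D)\mathcal{T}_m)$ and adding the $\mathcal{O}(|E|\cdot T)$ cheap term gives the first form of the bound; collapsing the polynomial factors into $\mathrm{poly}(|E|,D)$ and using $|\mathcal{B}|\leq\binom{|E|}{D}$ yields the compact statement. The main obstacle is the middle step: carefully arguing that ordering changes inside a fixed leader (and not only leader changes themselves) remain rare, which requires propagating the $\log\log T$ guarantee of Theorem~\ref{th:suboptimalArms} down to each intra-leader pair without inflating the constants. Everything else is routine accounting.
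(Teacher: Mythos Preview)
Your decomposition is exactly the paper's: cheap per-round work of order $|E|$ at every step, plus rare heavy events (greedy and ComputeNeighbors) whose frequency is controlled by Theorem~\ref{th:suboptimalArms} and Lemma~\ref{lemma:incorrectNeighborhood}. The final bound you obtain is correct.

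Two accounting points are off, though they do not break the result. First, the $(|E|-D)^2$ factor in the stated corollary is \emph{not} the number of intra-leader pairs: a leader has $D$ elements, so there are $\binom{D}{2}$ pairs, not $(|E|-D)^2$. In the paper's proof this factor is the $\gamma^2$ (with $\gamma=|E|-D$) hidden inside the $\cO(\log\log T)$ of Theorem~\ref{th:suboptimalArms}; it multiplies the number of \emph{leader changes}, not the number of ordering changes. Second, the argument of Lemma~\ref{lemma:incorrectNeighborhood} (via Lemma~\ref{lemma:CombesProutiere:B1}) actually gives $\cO(1)$ misorderings per base in expectation, not $\cO(\log\log T)$ per pair; the paper therefore treats neighborhood updates without a leader change as negligible compared to the leader-change term. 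Your weaker $\cO(\log\log T)$ claim still yields the stated bound after collapsing into $\mathrm{poly}(|E|,D)$, but the source of the $(|E|-D)^2$ is misattributed.
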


    The first term comes from identifying the arm to play in the neighborhood and the second one corresponds to neighborhood computations.
    In fact, all iterations take at least $\mathcal{O}(|E|-\rg)$, and there are $\mathcal{O}(\log\log(T))$ iterations such that the per-round time complexity suffers an additional time, which is detailed in the proof. 

\begin{proof}
    First, the identification of the arm to play requires at each iteration: the computation of $|E|$ optimistic terms, and $|E|-D$ comparisons of two optimistic terms. This induces a total computation complexity of $\cO(|E|\cdot T)$. Other tests and updates of statistics have the same computational complexity.

    Second, let us account for the changes of leader.
    Let $T\in\mathbb{N}^*\setminus\{1\}$ and let $C^T=\{n\leq T~|~\cL(n-1)\neq\cL(n)\}$. Because of the assumption that all bases have distinct values, $C^T\subseteq D^T\cup E^T$, where:
    \begin{itemize}
	\item $D^T=\{n\leq T~|~\mu_{\cL(n-1)}>\mu_{\cL(n)}\}$ is the number of suboptimal leader changes;
	\item $E^T=\{n\leq T~|~\mu_{\cL(n-1)}<\mu_{\cL(n)}\}$ is the number of leader changes improving current value.
    \end{itemize}

    The proof relies on Theorem~\ref{th:suboptimalArms} and remarking that in both $D^T$ and $E^T$, the leader changes either come from a suboptimal arm or lead to a suboptimal arm. 
    It holds that $D^T\subseteq\{n\leq T~|~\cL(n)\neq B^*\}$, and $E^T\subseteq\{n\leq T~|~\cL(n-1)\neq B^*\}$.
    Hence, from Theorem~\ref{th:suboptimalArms}, the expectation of the size of both $D^T$ and $E^T$ is $\cO(\gamma^2|\cB|\log\log(T))$, and then, there are $\cO(\gamma^2|\cB| \log\log T)$ leader changes, each requiring $\cO(|E|(\log |E| + \cT_m) +  D(|E|-D)\cT_m)$ time.

Finally, let us upper-bounds the expected computation cost of  neighborhood updates without leader change (Line~\ref{MAUB:line:computeNeighbors}).
The proof follows the same structure as the one on leader changes, except that we use Lemma~\ref{lemma:incorrectNeighborhood}.
It leads to a $\cO(D(|E|-D)\cT_m)$ cost for $T$ iterations, which is negligible with respect to other terms.
\end{proof}

\section{Experiments}
\label{sec:experiments}

In this section, \ouralgo{} is evaluated against \texttt{OMM} on four different matroid benchmarks, namely uniform, linear, graphic, and transversal matroids.
Section~\ref{sec:res} discusses the experimental results, 
but we first detail the experimental protocol in Section~\ref{sec:protocol}, and give usefull background on benchmarked matroids in Section~\ref{sec:domains}.

\subsection{Experimental protocol}\label{sec:protocol}

For each matroid, the learning algorithm observes, at time step $t$, realizations of Gaussian distributions $(X_e)_{e \in \cP(t)}$, where $\forall e,\ X_e \sim \mathcal{N}(\mu_e, \sigma^2)$. Mean values $\vmu$ are randomly set in $[0.5,1]$ (except for linear matroid, see corresponding paragraph) at the beginning of the experiment and the standard deviation $\sigma$ is set to $0.2$ for all distributions. 

Each experiment ran with 20 different seeds, and we capture (i) the regret (see Equation~\eqref{def:regret}), as a function of iterations, given in Figure~\ref{fig:regrets}; and (ii) overall time complexity, number of oracle and greedy calls, and \texttt{MAUB}'s number of order change within current leaders (Line~\ref{MAUB:line:computeNeighbors}) provided in Table~\ref{tab:timeComplexities}.

\paragraph{Reproducibility}

The code to reproduce the experiments is available in supplementary material.
All algorithms are implemented using Python3.11. We use the {\em Sagemath} library \citep{sagemath} to handle matroids. 
Graphic and uniform matroids are taken from Sagemath's database, while the linear and transversal matroids are constructed as Sagemath matroids using external data. More detail is given in their corresponding paragraphs. 
The experiments were run using one core of a 2.20 \texttt{GHz} Intel® Xeon® Gold 5320 \texttt{CPU}, part of a cluster with 384 \texttt{GiB} available \texttt{RAM}. 

\begin{table}
    \caption{Overall time complexity of oracle calls for different types of matroids.}
    \label{tab:oracleTime}
    \centering
\resizebox{\linewidth}{!}{
    \begin{tabular}{llll}
    \toprule
    \textbf{Matroid} &\textbf{Time Comp.} & {\bf Oracle}\\
        \midrule
    Uniform & $\cO(1)$ & cardinal test\\
    Graphic & $\cO(\log(|E|))$ & cycle test \\
    Transversal&  $\cO(\rg|E|)$ & augmenting path  on matching\\
    Linear&  $\cO(D^2)$ & Gaussian pivot step\\
\bottomrule
\end{tabular}
}
\end{table}

\begin{figure*}[ht]
    \centering
     \begin{subfigure}{0.23\textwidth}
    \centering
\resizebox{\linewidth}{!}{
\includegraphics{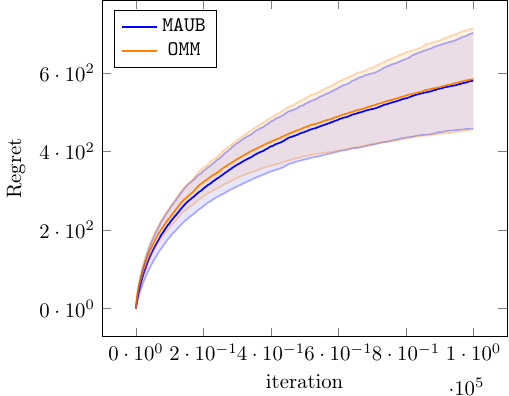}
}
\caption{Unif. matroid U(7,10)}
 \end{subfigure}
 \hfill
     \begin{subfigure}{0.23\textwidth}
    \centering
\resizebox{\linewidth}{!}{
\includegraphics{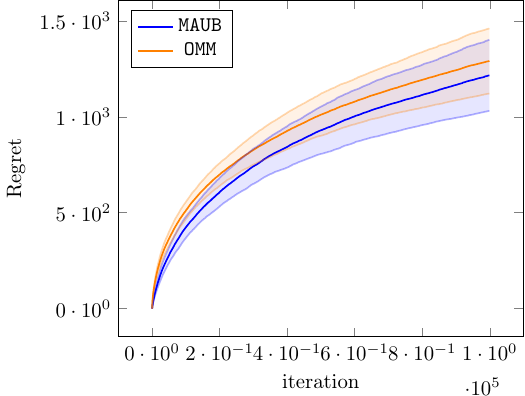}
}
\caption{Unif. matroid U(7,15)}
 \end{subfigure}
 \hfill
\begin{subfigure}{0.23\textwidth}
    \centering
\resizebox{\linewidth}{!}{
\includegraphics{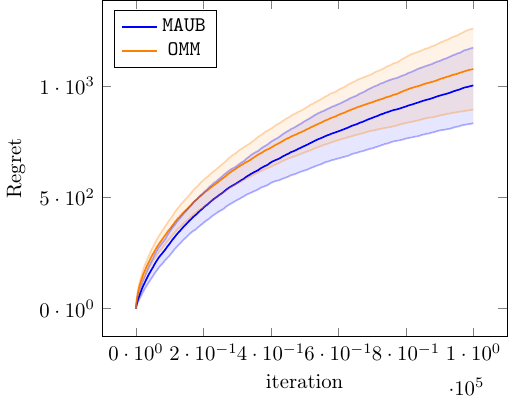}
}
\caption{Unif. matroid U(15,20)}
 \end{subfigure}
\hfill
     \begin{subfigure}{0.23\textwidth}
    \centering
\resizebox{\linewidth}{!}{
\includegraphics{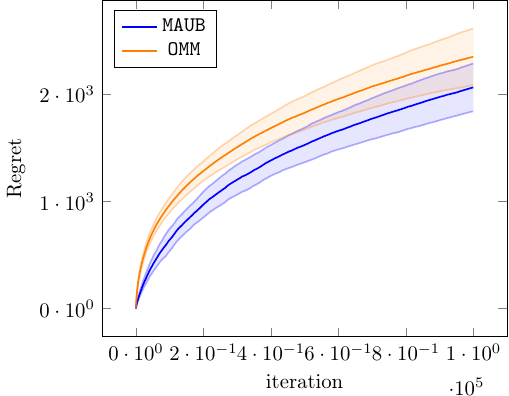}
}
\caption{Unif. matroid U(15,30)}
 \end{subfigure}

\begin{subfigure}{0.23\linewidth}
    \centering
\resizebox{\linewidth}{!}{
\includegraphics{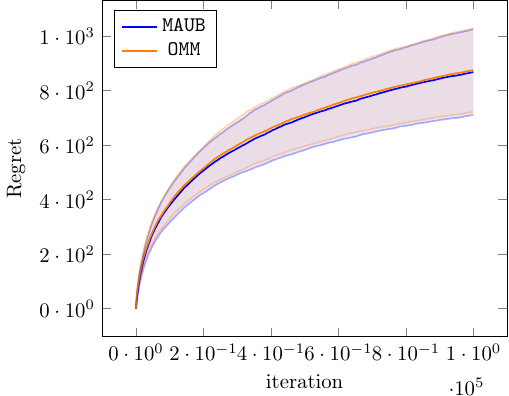}
}
\caption{Graphic matroid $K_5$}
\end{subfigure}
\hfill
\begin{subfigure}{0.23\linewidth}
    \centering
\resizebox{\linewidth}{!}{
\includegraphics{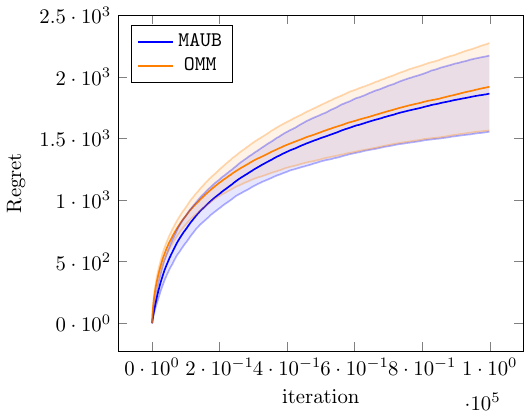}
}
\caption{Graphic matroid $K_7$}
\end{subfigure}
\hfill
\begin{subfigure}{0.23\linewidth}
    \centering
\resizebox{\linewidth}{!}{
\includegraphics{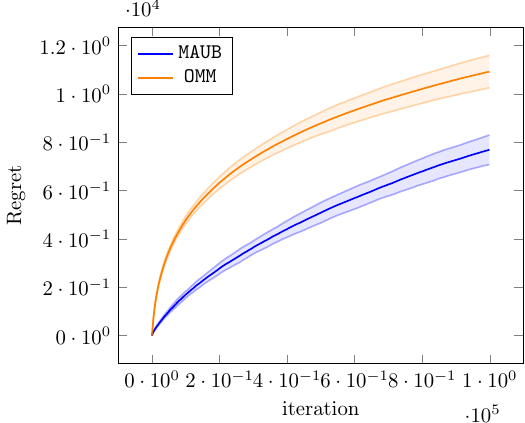}
}
\caption{Graphic matroid $K_{15}$}
\end{subfigure}
\hfill
\begin{subfigure}{0.23\linewidth}
    \centering
\resizebox{\linewidth}{!}{
\includegraphics{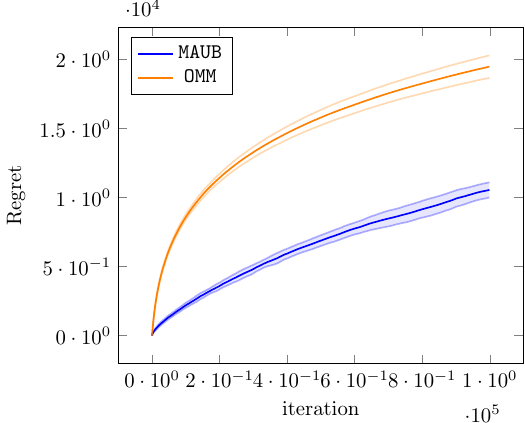}
}
\caption{Graphic matroid $K_{20}$}

\end{subfigure}
 \begin{subfigure}{0.23\linewidth}
    \centering
\resizebox{\linewidth}{!}{
\includegraphics{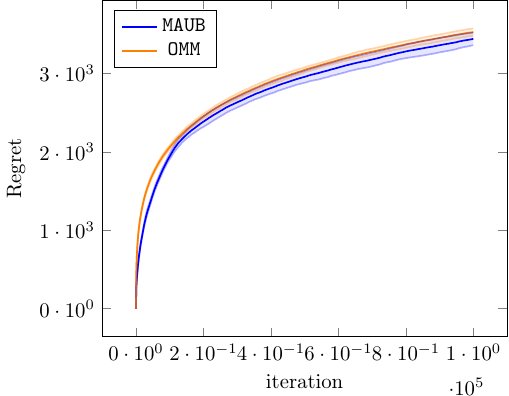}
}
\caption{Linear matroid}
\end{subfigure}
%
\hspace{1em}
\begin{subfigure}{0.23\linewidth}
    \centering
\resizebox{\linewidth}{!}{
\includegraphics{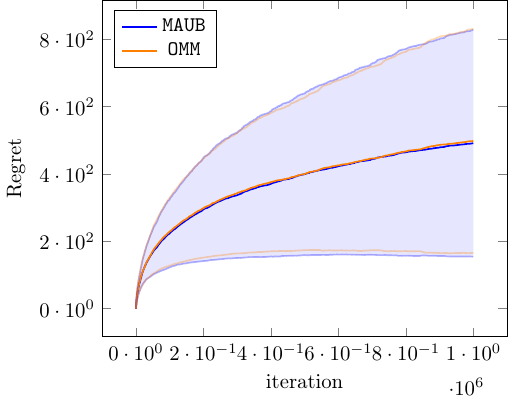}
}
\caption{Transversal matroid}
\end{subfigure}
\hfill
\caption{Regret vs iterations for uniform, linear, graphic and transversal matroids (the smaller, the better).
U($\rg$,$|E|$) is the uniform matroid of rank $\rg$ on $|E|$ elements.
$K_N$ it the graphic matroid associated to the complete graph of size $N$. 
Our algorithm \textbf{\ouralgo{}} consistently \textbf{matches or outperforms \texttt{OMM}}.}
\label{fig:regrets}
\end{figure*}

\begin{table*}
\caption{Time statistics for \ouralgo{} and \texttt{OMM} on several matroids.
Computation time is in seconds and other metrics correspond to numbers of calls (averaged over 20 runs). \texttt{Neigh. Up.} denotes the number of neighborhood updates performed by \ouralgo{} while the leader remains stable.
\ouralgo{} significantly \textbf{reduces the number of oracle calls} compared to \texttt{OMM}, which translates into \textbf{lower overall computation time}.
}
\label{tab:timeComplexities}
      \centering
\begin{subtable}{.49\linewidth}
    \label{tab:timeComplexityUniformMatroid}
    \caption{Uniform matroids}
    \centering
\resizebox{\linewidth}{!}{
    \begin{tabular}{lrrrc}
	\toprule
	\textbf{Algorithm} &\textbf{Time}& \textbf{Oracle Calls} & \textbf{Greedy Calls}& \textbf{Neigh. Up.}\\
	\midrule
\multicolumn{5}{c}{Uniform ($D=7$, $|E|=10$, $|\cB|=120$)}\\ 
\midrule
	\texttt{OMM} & 3.45\ s & \convert{700000.0} & \convert{100000.0}& -\\
	\texttt{MAUB} & {\bf 1.92\ s} & {\bf \convertBold{497.25}} & {\bf \convertBold{13.8}}&\convertBold{121.75}\\
\midrule
\midrule
\multicolumn{5}{c}{Uniform ($D=7$, $|E|=15$, $|\cB|=6435$)}\\ 
\midrule
	\texttt{OMM} & 4.44\ s & \convert{700000.0} & \convert{100000.0}& -\\
	\texttt{MAUB} & {\bf 3.84\ s} & {\bf \convertBold{2675.4}} & {\bf \convertBold{48.2}} &\convertBold{247.05}\\
\midrule
\midrule
\multicolumn{5}{c}{Uniform ($D=15$, $|E|=20$, $|\cB|=15504$)}\\ 
\midrule
 \texttt{OMM} & 6.87\ s & \convert{1500000.0} & \convert{100000.0}& -\\
 \texttt{MAUB} & {\bf 3.50\ s} & {\bf \convertBold{3327.75}} & {\bf \convertBold{40.2}}&\convertBold{506.75}\\
\midrule
\midrule
\multicolumn{5}{c}{Uniform ($D=15$, $|E|=30$, $|\cB|=\convert{155117520}$)}\\ 
\midrule
 \texttt{OMM} & 8.84\ s & \convert{1500000.0} & \convert{100000.0}& -\\
 \texttt{MAUB} & {\bf 7.51\ s} & {\bf \convertBold{12833.25}} & {\bf \convertBold{76.1}} &\convertBold{705.35}\\
\bottomrule
\end{tabular}
}
\end{subtable}
\hspace{\fill}
\begin{subtable}{.49\linewidth}
    \caption{Graphic matroids}
    \label{tab:timeComplexityGraphicMatroid}
    \centering
\resizebox{\linewidth}{!}{
    \begin{tabular}{lrrrc}
	\toprule
	\textbf{Algorithm} &\textbf{Time}& \textbf{Oracle Calls} & \textbf{Greedy Calls}& \textbf{Neigh. Up.}\\
	\midrule
\multicolumn{5}{c}{$K_{5}$ ($D=4$, $|E|=10$, $|\cB|=125$)}\\ 
 \midrule 
 \texttt{OMM} & 11.28\ s & \convert{656480.7} & \convert{100000.0}& -\\
 \texttt{MAUB} & {\bf 7.83\ s} & \convertBold{797.35} & \convertBold{23.05} & \convertBold{60.4} \\
\midrule
\midrule
\multicolumn{5}{c}{$K_{7}$ ($D=6$, $|E|=21$, $|\cB|=16807$)}\\ 
 \midrule 
 \texttt{OMM} & 22.24\ s & \convert{1443384.05} & \convert{100000.0}& -\\
 \texttt{MAUB} & {\bf 18.02\ s} & {\bf \convertBold{7030.15}} & {\bf \convertBold{64.7}} & \convertBold{166.35} \\
\midrule
\midrule
 \multicolumn{5}{c}{$K_{15}$ ($D=14$, $|E|=105$, $1.95 \cdot 10^{15}$)}\\ 
 \midrule 
 \texttt{OMM} & 131.89\ s & \convert{8836943.45} & \convert{100000.0}& -\\
 \texttt{MAUB} & {\bf 108.43\ s} & {\bf \convertBold{581680.95}} & {\bf \convertBold{184.9}} & \convertBold{1575.05} \\
\midrule
\midrule
\multicolumn{5}{c}{$K_{20}$ ($D=19$, $|E|=190$, $2.22 \cdot 10^{23}$)}\\ 
 \midrule 
 \texttt{OMM} & 262.55\ s & \convert{16687525.75} & \convert{100000.0}& -\\
 \texttt{MAUB} & {\bf 235.67\ s} & {\bf \convertBold{3501989.1}} & {\bf \convertBold{357.3}} & \convertBold{3765.8} \\
\bottomrule
\end{tabular}
}
\end{subtable}
\hspace{\fill}
\bigbreak
\begin{subtable}{.49\linewidth}
    \label{tab:timeComplexityLinearMatroid}
    \caption{Linear matroid}
    \centering
\resizebox{\linewidth}{!}{
    \begin{tabular}{lrrrc}
	\toprule
	\textbf{Algorithm} &\textbf{Time}& \textbf{Oracle Calls} & \textbf{Greedy Calls} & \textbf{Neigh. Up.}\\
	\midrule
	\multicolumn{5}{c}{Linear ($D=16$, $|E|=100$, $|\cB| \texttt{ unknown}$)}\\ 
	\midrule
	\texttt{OMM} &  72.69\ s & \convert{9698629.3} &  \convert{100000.0} & -\\
	\texttt{MAUB} & {\bf 60.49\ s} & {\bf \convertBold{227631.05}} & {\bf \convertBold{49.6}} &\convertBold{203.2}\\
	\bottomrule
    \end{tabular}
}
\end{subtable}
\hspace{\fill}
\begin{subtable}{.49\linewidth}
    \label{tab:timeComplexityTransversalMatroid}
    \caption{Transversal matroid}
    \centering
\resizebox{\linewidth}{!}{
    \begin{tabular}{lrrrc}
	\toprule
	\textbf{Algorithm} &\textbf{Time}& \textbf{Oracle Calls} & \textbf{Greedy Calls} & \textbf{Neigh. Up.}\\
	\midrule
	\multicolumn{5}{c}{Transversal ($D=6$, $|E|=7$, $|\cB|=7$)}\\ 
	\midrule
	\texttt{OMM} & 83.42\ s & \convert{6000000.0} & \convert{1000000.0} & -\\
	\texttt{MAUB} & {\bf 17.44\ s} & {\bf \convertBold{227.95}} & {\bf \convertBold{25.6}} &\convertBold{50.75}\\
	 \bottomrule
    \end{tabular}
}
\end{subtable}
\end{table*}

\subsection{Experimental Domains}\label{sec:domains}

We detail here the  matroids considered in the experiments.
The time complexity $\cT_m$ of the membership oracle for the four classes of matroid are given in Table~\ref{tab:oracleTime}.

\paragraph{Uniform Matroids}
\label{sec:exp:uniform}

In these experiments, all subsets of size at most $\rg$ are independent, so the problem reduces to identifying the  $\rg$ elements with the highest values.

For such matroids, the independence oracle simply checks the size of a subset $I$, which is done in $\cO(1)$ time.

\paragraph{Linear Matroid}
\label{sec:exp:linear}

Learning in linear matroids is illustrated here by searching for set of independent movies with highest rating. Similarly to \citeauthor{kve-uai-14} (\citeyear{kve-uai-14}) 100 movies from the dataset {\em Movielens} are considered.
The database attributes types, among 18 possible ones, to each movie.
A set of movies $I$ is independent if and only if the characteristic vectors $\{u_e\in\{0,1\}^{18}, e\in I\}$ form a linearly independent family of $\mathbb{R}^{18}$. The latter constraint ensures diversity to some extent. Values of movies are given by users' average ratings $\vmu$, with $\forall e,\ \mu_e \in [0,5]$.

The independence oracle for linear matroids involves one step in Gaussian pivot, which takes $\cO(D^2)$ time.

\paragraph{Graphic Matroids}
\label{sec:exp:graphic}

For any graph $G=(S,A)$, a graphic matroid is defined by the set of spanning trees of the graph.
µ
As \citeauthor{tal-aamas-16} (\citeyear{tal-aamas-16}), we consider the graphic matroid $K_N$ associated to the complete graph of size $N$. 

Checking whether adding an element $x$ to an independent set $I$ creates a circuit can be done in $\cO(\log(|E|)$ time, using union-find data structure.

\paragraph{Transversal Matroid}
\label{sec:exp:transversal}

Given a bipartite graph $G=(X \cup Y, A)$, one can define a matroid where the groundset is the vertices in $X$, and a subset $B \subseteq X$ is independent if and only if it admits a matching with $Y$. An arbitrary graph, available in supplementary material, with $|X|=7$, $|Y|=6$, and $17$ edges is considered. 

Assuming that $I$ is independent, and that a matching $M$ is stored, deciding whether $I+x$ is independent can be reduced to checking if an augmenting path in $G$ with respect to $M$ can be found. The independence oracle for transversal matroid thus takes $\cO(D|E|)$ time.

\subsection{Results}\label{sec:res}

Figure \ref{fig:regrets} shows no loss in regret of \texttt{MAUB} compared to \texttt{OMM} in asymptotic regime, and an improved regret in early iterations for large matroids ({\em e.g.} $U(15;30)$, linear matroid, $K_{15}$, $K_{20}$). We believe that this phenomenon is due \texttt{MAUB}'s restricted exploration, while \texttt{OMM} might test several suboptimal elements in the same iteration. 

The number of oracle calls given in Table \ref{tab:timeComplexities} is drastically reduced compared to \texttt{OMM} (ranging from one order of magnitude ({\em e.g.} $K_{20}$, Linear) to three ({\em e.g.} $K_{3,3}$, transversal)). This is due to the number of iterations performed by \texttt{MAUB} requiring calls to the matroid structure ({\em i.e.} number of greedy calls plus number of order change requiring neighborhood computations) being significantly lower than the number of greedy calls performed by \texttt{OMM}. 
As expected, this smaller number of oracle calls translates into lower computation time.

A side observation is that constant $C_{\cM}$ in Theorem \ref{cor:timeComplexity} is largely overestimated, as pointed out by the relatively low number of greedy calls performed by \texttt{MAUB} in experiments on matroids with intractable total number of bases ({\em e.g.} $U(15,20)$, $K_{20}$, linear).

\section{Discussion and Perspectives}

We introduced \texttt{MAUB}, a unimodal bandit approach for learning in matroid combinatorial semi-bandit problems. 
While optimal regret is already achieved by the state of the art for this combinatorial structure, repeated queries to the underlying optimization problem can be a major computational bottleneck for certain classes of matroids. 
As supported by the theoretical analysis and the empirical experiments on various classes of matroids, \ouralgo{} leverages the unimodality of matroids to drastically reduce the number of oracle calls which results in an improved overall time complexity (for no loss in terms of regret).
Beyond our study, we believe that the present contribution lays down a promising path to tackle time-complexity reduction in other semi-bandit problems.

Our work also underscores that the current unimodal bandit analysis lacks a fine-grained characterization of actual early trajectories taken by the learning algorithm. 
For instance, current theoretical analyses of combinatorial unimodal bandits ignore correlation between suboptimal arms, leading to a large overestimation of some constant terms in the overall regret and time complexity.
Filling this theoretical gap would widen the relevance of the unimodal approach to the non-asymptotic regime, beyond the specific matroid structure considered in this paper.

\bibliography{biblio.bib}

\setcounter{section}{0}
\renewcommand{\thesection}{\Alph{section}}
\onecolumn

\def\Appendix

\section{Proof Diagram}

To help the reader, we provide below (\Cref{fig:proofDiagram}) a proof diagram summarizing the finite-time analysis of \texttt{MAUB} conducted in the appendix.

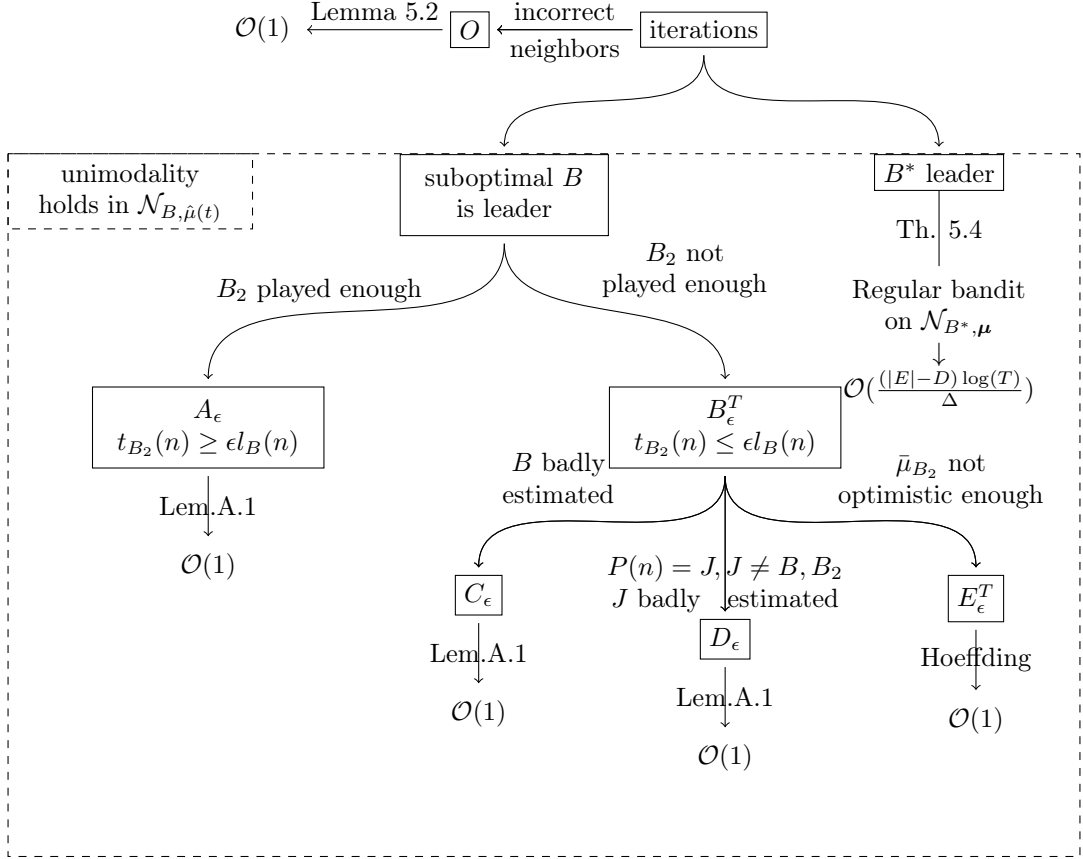
\begin{figure}[H]
\centering
\begin{tikzpicture}[block/.style={draw}]

\node[block] (1) {iterations};
\node[block, left=2cm of 1] (0) {$O$};
\node[left=2cm of 0] (tmp0) {$\cO(1)$};
\node[block, below left= 2 of 1, xshift=1cm] (2) {\begin{tabular}{c} suboptimal $B$\\ is leader \end{tabular}};
\node[block, below right= 2 of 1] (3) {$B^{*}$ leader};
\node[below= 1 of 3] (RegularBandit) {\begin{tabular}{c} Regular bandit \\ on $\cN_{B^*,\vmu}$ \\ $\downarrow$  \\ $\cO(\frac{(|E|-D)\log(T)}{\Delta})$ \end{tabular}};
\node[block, below left= 2 and 1 of 2] (Aeps) {\begin{tabular}{c} $A_\epsilon$ \\ $t_{B_2}(n) \geq \epsilon l_B(n)$ \end{tabular}};
\node[below=1 of Aeps] (AepsVal) {$\cO(1)$};
\node[block, below right= 2 and 1 of 2, xshift=-1cm] (Beps) {\begin{tabular}{c} $B_\epsilon^T$ \\ $t_{B_2}(n) \leq \epsilon l_B(n)$ \end{tabular}};
\node[block, below left=2 of Beps] (C) {$C_\epsilon$};
\node[below=1 of C] (Cval) {$\cO(1)$};
\node[block, below =2 of Beps] (D) {$D_\epsilon$};
\node[below=1 of D] (Dval) {$\cO(1)$};
\node[block, below right=2 of Beps] (E) {$E_\epsilon^T$};
\node[below=1 of E] (Eval) {$\cO(1)$};

\draw (3) to[out=-90,in=90] node[] {Th. \ref{th:regretMAUB}} (RegularBandit);
\begin{scope}[->, shorten >=1mm, shorten <=1mm]
\draw (1) to[out=-180,in=0] node[above]{incorrect} (0);
\draw (1) to[out=-180,in=0] node[below]{neighbors} (0);
\draw (0) to[out=-180,in=0] node[above] {Lemma \ref{lemma:incorrectNeighborhood}} (tmp0);
\draw (1) to[out=-90,in=90] (3);
\draw (1) to[out=-90,in=90] (2);
\draw (2) to[out=-90,in=90] node[above left=0 and -1] {$B_2$ played enough} (Aeps);
\draw (Aeps) to[out=-90,in=90] node[] {Lem.\ref{app:lemma:CombesProutiere:B1}} (AepsVal);
\draw (2) to[out=-90,in=90] node[above right=0 and -.5] {\begin{tabular}{c} $B_2$ not \\ played enough \end{tabular}} (Beps);
\draw (Beps) to[out=-90,in=90] (C);
\draw (C) to[out=-90,in=90] node[] {Lem.\ref{app:lemma:CombesProutiere:B1}} (Cval);
\draw (Beps) to[out=-90,in=90] (D);
\draw (D) to[out=-90,in=90] node[] {Lem.\ref{app:lemma:CombesProutiere:B1}}(Dval);
\draw (Beps) to[out=-90,in=90] (E);
\draw (E) to[out=-90,in=90] node[] {Hoeffding}(Eval);
\draw (Beps) to[out=-90,in=90] node[above left=0 and -.5] {\begin{tabular}{c} $B$ badly \\ estimated \end{tabular}} (C);
\draw (Beps) to[out=-90,in=90] node[below] {\begin{tabular}{c} $\cP(n)=J,  J \neq B,B_2$\\ $J$ badly ~ estimated \end{tabular}} (D);
\draw (Beps) to[out=-90,in=90] node[above right=0 and -.5] {\begin{tabular}{c} $\bmu_{B_2}$ not \\ optimistic enough\end{tabular}} (E);
\draw[dashed] (-9.25,-1.65) rectangle (5,-11);
\draw[dashed] (-9.25,-1.65) rectangle node {\begin{tabular}{c} unimodality\\ holds in $\cN_{B, \hmu(t)}$ \end{tabular}} (-6,-2.65);
\end{scope}
\end{tikzpicture}
\caption{Proof diagram of finite-time analysis of \texttt{MAUB}. We remind the reader that whenever unimodality is satisfied, $B_2$ is an arm in $\cN_{B, \vmu}$ such that $\vmu_{B_2}>\vmu_{B}$.}
\label{fig:proofDiagram}
\end{figure}

\section{Concentration Inequality}

We start by restating two core lemmas introduced by \citet{com-icml-14}. They allow bounding the expected number of high deviations of mean statistics at specific time steps.

\begin{lemma}{\cite[Lemma B.1]{com-icml-14}}
    \label{app:lemma:CombesProutiere:B1}
    Let $k \in E$ and $\epsilon>0$. Define $\mathcal{F}_n$ the $\sigma$-algebra generated by $(X_k(t))_{t\leq n, k \in E}$. Let $\Lambda \subseteq \mathbb{N}$ be a random set of instants. Assume that there exists a sequence of random sets $(\Lambda(s))_{s\geq 1}$ such that (i) $\Lambda \subseteq \cup_{s\geq 1} \Lambda(s)$, (ii) $\forall s \geqslant 1,\ \forall n\in\Lambda(s)$, $t_k(n)\geq \epsilon s$, (iii) $\forall s,\ |\Lambda(s)|\leq 1$, and (iv) the event $\{n\in\Lambda(s)\}$ is $\mathcal{F}_n$-measurable. Then, $\forall \delta>0$:
    \begin{align}
	\mathds{E} \left[\sum_{n\geq 1} \mathds{1}\{n\in\Lambda, |\hat{\mu}_k(n) - \mu_k|>\delta\}\right] \leq \frac{1}{\epsilon \delta^2}
    \end{align}
\end{lemma}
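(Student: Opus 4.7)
The plan is to chain assumptions (i)--(iv) to reduce the total sum of indicators to a second-moment estimate on the empirical mean of arm $k$.

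First, combining (i) and (iii) one gets
\begin{align*}
    \mathds{E}\!\left[\sum_{n\geq 1}\mathds{1}\{n\in\Lambda,|\hat\mu_k(n)-\mu_k|>\delta\}\right]
    \leq \sum_{s\geq 1}\mathds{P}\bigl(\Lambda(s)\neq\emptyset,\,|\hat\mu_k(\tau_s)-\mu_k|>\delta\bigr),
\end{align*}
where $\tau_s$ denotes the (almost surely unique) element of $\Lambda(s)$, with the convention $\tau_s=+\infty$ if $\Lambda(s)=\emptyset$. By (iv), the event $\{\tau_s=n\}$ is $\mathcal{F}_n$-measurable, so $\tau_s$ is a stopping time for $(\mathcal{F}_n)_n$; by (ii), $t_k(\tau_s)\geq \epsilon s$ on $\{\tau_s<\infty\}$, meaning $\hat\mu_k(\tau_s)$ is the empirical mean of at least $\epsilon s$ i.i.d.\ samples of $X_k$.

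Second, I would apply a Markov–Chebyshev inequality on each term to turn the indicator into a variance term:
\begin{align*}
    \mathds{P}\bigl(\Lambda(s)\neq\emptyset,\,|\hat\mu_k(\tau_s)-\mu_k|>\delta\bigr)
    \leq \frac{1}{\delta^2}\,\mathds{E}\bigl[(\hat\mu_k(\tau_s)-\mu_k)^2 \mathds{1}\{\tau_s<\infty\}\bigr].
\end{align*}
The problem thus reduces to showing $\sum_s \mathds{E}\bigl[(\hat\mu_k(\tau_s)-\mu_k)^2\mathds{1}\{\tau_s<\infty\}\bigr]\leq 1/\epsilon$, which combined with the above yields the advertised $1/(\epsilon\delta^2)$ bound.

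The main obstacle lies in this last step. A naive per-$s$ Chebyshev already gives a conditional variance bound $\sigma_k^2/(\epsilon s)$, whose sum over $s$ diverges; the joint structure across $s$ must therefore be exploited. My plan is to reindex the outer sum by the realized sample count $m=t_k(\tau_s)$. The empirical means $\hat\mu_k^{(m)}$ form a reverse martingale in $m$ with $\mathds{E}[(\hat\mu_k^{(m)}-\mu_k)^2]=\sigma_k^2/m$, so a backward Doob-type maximal inequality controls $\sup_{m\geq \epsilon s}|\hat\mu_k^{(m)}-\mu_k|^2$ by $\sigma_k^2/(\epsilon s)$ in $L^1$. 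Coupling this with the lower bound $\epsilon s \leq t_k(\tau_s)$ from (ii) and the fact that $t_k$ is non-decreasing---so that the indices $s$ for which $\Lambda(s)$ is non-empty carve out essentially disjoint slabs of sample counts---allows one to collapse the telescoping contributions into a single $\sigma_k^2/\epsilon$ (with $\sigma_k^2\leq 1/4$ for bounded rewards). Combining with the Markov–Chebyshev step then produces the stated $1/(\epsilon\delta^2)$ bound.
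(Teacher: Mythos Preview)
The paper does not prove this lemma; it is quoted verbatim from \cite[Lemma~B.1]{com-icml-14} and used as a black box throughout the appendix, so there is no in-paper argument to compare against.

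On the merits of your attempt: the reduction in your first two displayed lines is fine, but the third step contains a genuine gap. Your plan is to show $\sum_{s}\mathds{E}\bigl[(\hat\mu_k(\tau_s)-\mu_k)^2\mathds{1}\{\tau_s<\infty\}\bigr]\leq 1/\epsilon$, and you justify it by asserting that the realized sample counts $t_k(\tau_s)$ ``carve out essentially disjoint slabs.'' Nothing in hypotheses (i)--(iv) enforces this. Take the admissible instance where arm $k$ is pulled at every round, $\epsilon=1$, and $\Lambda(s)=\{s\}$: then $t_k(\tau_s)=s$, every $\Lambda(s)$ is non-empty, the ``slabs'' are single integers, and $\sum_s\mathds{E}[(\tilde\mu_k(s)-\mu_k)^2]=\sum_s\sigma_k^2/s=\infty$. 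No reindexing or telescoping can rescue a pure second-moment bound here, and the backward-Doob control you invoke only yields $\mathds{E}[\sup_{m\geq\epsilon s}(\tilde\mu_k(m)-\mu_k)^2]\leq C\sigma_k^2/(\epsilon s)$, whose sum over $s$ is again the harmonic series.

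The proof in Combes--Prouti\`ere does not go through Chebyshev at all: it uses sub-Gaussian (Hoeffding-type) tails. One bounds $\mathds{P}\bigl(\Lambda(s)\neq\emptyset,\,|\hat\mu_k(\tau_s)-\mu_k|>\delta\bigr)\leq\mathds{P}\bigl(\exists\,m\geq\epsilon s:|\tilde\mu_k(m)-\mu_k|>\delta\bigr)$, controls the right-hand side via a maximal/peeling inequality to get a tail of order $\exp(-c\,\epsilon s\,\delta^2)$, and then sums the resulting geometric series in $s$. It is the exponential tail---not a variance bound---that produces the $1/(\epsilon\delta^2)$ scaling; a Chebyshev tail of order $1/(\epsilon s\,\delta^2)$ is simply not summable.
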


\begin{lemma}{\cite[Lemma B.2]{com-icml-14}}
    \label{lemma:CombesProutiere:B2}
    Let $k,k' \in E$, $k \neq k'$ and $\epsilon>0$. Define $\mathcal{F}_n$ the $\sigma$-algebra generated by $(X_k(t))_{t\leq n, k \in \{1,\dots,[E|\}}$. Let $\Lambda \subset \mathbb{N}$ be a random set of instants. Assume that there exists a sequence of random sets $(\Lambda(s))_{s\geq 1}$ such that (i) $\Lambda \subset \cup_{s\geq 1} \Lambda(s)$, (ii) $\forall s \geq 1,\ \forall n\in\Lambda(s)$, $t_k(n)\geq \epsilon s$, and $t_{k'}(n)\geq \epsilon s$, (iii) $\forall s,\ |\Lambda(s)|\leq 1$ almost surely, (iv) $\forall n\in \Lambda,\ \mathds{E}[\hat{\mu_k}(n)]\leq\mathds{E}[\hat{\mu}_{k'}(n)] - \Delta_{k,k'}$, and (v) the event $\{n\in\Lambda(s)\}$ is $\mathcal{F}_n$-measurable. Then, $\forall \delta>0$:
    \begin{align}
	\mathds{E} [\sum_{n\geq 1} \mathds{1}\{n\in\Lambda, \hat{\mu}_k(n) > \hat{\mu}_{k'}(n)\}] \leq \frac{8}{\epsilon \Delta_{k,k'}^2}
    \end{align}
\end{lemma}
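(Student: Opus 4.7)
The plan is to reduce the two-arm comparison event $\{\hat{\mu}_k(n) > \hat{\mu}_{k'}(n)\}$ to a disjunction of two single-arm deviation events, each of which can then be controlled directly by the preceding \Cref{app:lemma:CombesProutiere:B1}. No fresh concentration argument will be required; the work is entirely a clean bookkeeping reduction.

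First, I would exploit hypothesis (iv). Since the $(X_k(t))$ and $(X_{k'}(t))$ are i.i.d.\ with means $\mu_k$ and $\mu_{k'}$, the sample means are unbiased, so (iv) translates into $\mu_{k'} - \mu_k \geq \Delta_{k,k'}$ for every $n \in \Lambda$. Decomposing
\begin{equation*}
\hat{\mu}_k(n) - \hat{\mu}_{k'}(n) \;=\; \bigl(\hat{\mu}_k(n) - \mu_k\bigr) \,+\, \bigl(\mu_{k'} - \hat{\mu}_{k'}(n)\bigr) \,-\, (\mu_{k'} - \mu_k),
\end{equation*}
one sees that $\hat{\mu}_k(n) > \hat{\mu}_{k'}(n)$ together with $n \in \Lambda$ forces $\bigl(\hat{\mu}_k(n) - \mu_k\bigr) + \bigl(\mu_{k'} - \hat{\mu}_{k'}(n)\bigr) > \Delta_{k,k'}$. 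A pigeonhole argument then yields the pointwise domination
\begin{align*}
\mathds{1}\{n \in \Lambda,\; \hat{\mu}_k(n) > \hat{\mu}_{k'}(n)\}
&\leq \mathds{1}\{n \in \Lambda,\; \hat{\mu}_k(n) - \mu_k > \Delta_{k,k'}/2\} \\
&\quad + \mathds{1}\{n \in \Lambda,\; \mu_{k'} - \hat{\mu}_{k'}(n) > \Delta_{k,k'}/2\}.
\end{align*}

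Next, I would sum over $n \geq 1$, take expectations, and apply \Cref{app:lemma:CombesProutiere:B1} separately to each of the two resulting series. Hypothesis (ii) of the current lemma guarantees $t_k(n) \geq \epsilon s$ on each $\Lambda(s)$, which is precisely what is needed to invoke \Cref{app:lemma:CombesProutiere:B1} for arm $k$ with $\delta = \Delta_{k,k'}/2$; the second half of (ii), $t_{k'}(n) \geq \epsilon s$, plays the symmetric role for arm $k'$. The remaining hypotheses (i), (iii), (v) transfer verbatim, since the additional indicator on a deviation event is $\mathcal{F}_n$-measurable. Each application yields a bound $\tfrac{1}{\epsilon (\Delta_{k,k'}/2)^2} = \tfrac{4}{\epsilon \Delta_{k,k'}^2}$, and summing the two contributions gives the claimed $\tfrac{8}{\epsilon \Delta_{k,k'}^2}$.

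The only subtlety worth flagging is step one: one must be careful that (iv) is stated in expectation, so the move to the pointwise gap $\mu_{k'} - \mu_k \geq \Delta_{k,k'}$ truly relies on the i.i.d.\ sampling assumption made at the outset of \Cref{sec:background}. Beyond this, the proof is mechanical.
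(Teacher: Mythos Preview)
The paper does not actually prove this lemma: it is simply restated as \cite[Lemma~B.2]{com-icml-14} and used as a black box. Your proposal is therefore being compared against the original Combes--Prouti\`ere argument rather than anything in the present paper.

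Your reduction is correct and is exactly the standard proof: split the comparison event via the midpoint $\Delta_{k,k'}/2$, then invoke \Cref{app:lemma:CombesProutiere:B1} once for $k$ and once for $k'$, each contributing $4/(\epsilon\Delta_{k,k'}^2)$. The bookkeeping (hypotheses (i), (iii), (v) transferring, hypothesis (ii) supplying the sample-count lower bound for both arms) is handled cleanly, and your remark on (iv) is apt---the statement is phrased in expectation but collapses to the deterministic gap $\mu_{k'}-\mu_k\geq\Delta_{k,k'}$ under the i.i.d.\ assumption. Nothing is missing.
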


\section{Incorrect Neighborhood}
\label{app:proof:nbItBadNeighborhood}

This section proves that the number of iterations for which unimodality is not ensured is finite, in expectation. The proof relies on remarking that Th.\ref{th:unimodality} is not guaranteed to hold only for wrong ordering $\hmu_{e_{i_1}}<\dots<\hmu_{e_{i_{|L(t)|}}}$ of items in the current leader $L(t)$. 

\nbItBadNeighborhood*

\begin{proof}
    Let us define 
    \begin{align*}
	O&\eqdef\{s\leq T~|~\cL (s)=B,\ B-x^*+y^*\notin \cN_{B,\hmu(s)}\},
    \end{align*}
    and denote $\delta=\min_{a,b \in E, a \neq b}|\mu_a-\mu_b|$ the minimal gap between two elements in $E$.
    By design of Algorithm~\ref{alg:getNeighbors},
    \begin{align*}
	O&= \{s\leq T~|~ B=\cL (s),\ \exists x\in \{\tilde x\in B,\ B-\tilde x+y^*\in\cB\}, \hmu_{x^*}(s)>\hmu_{x}(s)\} \\
	&\subseteq \cup_{x \in B-x^*} \{s\leq T\mid\cL(s)=B,\
    \hmu_{x^*}(s)>\hmu_x(s)\}\\
	&\subseteq \cup_{x \in B-x^*} \{s\leq T\mid\cL(s)=B,\ |\hmu_{x^*}(n)-\mu_{x^*}|>\frac{\delta}{2} \vee |\hmu_{x}(n)-\mu_{x}|>\frac{\delta}{2}\} \\
	&\subseteq \bigcup_{x \in B}  \{s\leq T\mid\cL(s)=B,\ |\hmu_{x}(n)-\mu_{x}|>\frac{\delta}{2}\}.
    \end{align*}
    
Let $x\in B$ and bound the expected size of $O_x\eqdef\{s\leq T\mid\cL(s)=B,\ |\hmu_{x}(n)-\mu_{x}|>\frac{\delta}{2}\}$. Let $\Lambda\eqdef\{n\leq T,\ \cL(n)=B\}$. 
It holds that $O_x=\{n\in\Lambda,\ |\hmu_{x}(n)-\mu_{x}|>\frac{\delta}{2}\}$. 
For all $s$, let us define $\Lambda(s)=\{n \leq T,\ \cL(n)=B,\ l_B(n)=s\}$ 
so that $\Lambda \subseteq \cup_s \Lambda(s)$.
Since $l_B$ is increased each time $B$ is the leader, $\forall s,\ |\Lambda(s)|\leq 1$.
    $x$ being in $B$, and by design of \texttt{MAUB} forcing the leader to be played enough, for any $s\geqslant1$ and $n\in\Lambda(s)$, $t_{x}(n) \geq t_B(n) \geq l_B(n)/(\gamma+1)=s/(\gamma+1)$. Then, by Lemma~\ref{app:lemma:CombesProutiere:B1}, for $\epsilon\eqdef1/(\gamma+1)$
    \begin{align*}
    \mathds{E}\left[|O_x|\right] = 
	\mathds{E}\left[\sum_n \mathds{1}_{\{n\in\Lambda,\ |\hmu_{x}(n)-\mu_{x}|>\frac{\delta}{2}\}}\right]\leq\frac{4}{\epsilon\delta^2}.
    \end{align*}
    Since this holds for any $x$, 
    $$\mathds{E}[|O|]
    \leqslant \sum_x \mathds{E}\left[|O_x|\right]
    \leqslant |B|\frac{4}{\epsilon\delta^2}
    =\cO(1).$$
\end{proof}

\section{Suboptimal Arms}
\label{app:proof:suboptimalArms}

We are now ready to prove the key theorem for the regret analysis, stating that suboptimal arms can only be leaders for a $\cO(\log\log T)$ amount of time. The rest of the time then must be spent with the optimal arm being the leader, which resembles a regular bandit.

\suboptimalArms*

\begin{proof}

Let $B$ be a suboptimal arm, and let $B^*=\argmax_{B\in\cB} \mu_B$ be the global optimum. 
Let $B_2 \eqdef B-x^*+y^* = \argmax_{J \in \cN_{B, \vmu}} \mu_J$ be the best arm in the neighborhood of $B$.

It holds that 
\begin{align}
\{n \leq T~|~\cL(n)=B\} \subseteq V \cup A_\epsilon \cup B_\epsilon^T,
\end{align}
where:
\begin{itemize}
    \item $V\eqdef\{n~|~\cL(n)=B,\ \exists x,y \in \cL(n), \hmu_x(n)>\hmu_y(n), \mu_x<\mu_y\}$; and 
    \item for any $n\notin V$, it holds that $B_2 \in \cN_{B,\vhmu(n)}$, and  
    $A_\epsilon$ and $B_\epsilon^T$ can be defined as follows: 
	\begin{itemize}
	    \item $A_\epsilon\eqdef\{n\modif{~\notin V}~|~\cL(n)=B, t_{B_2}(n) \geq \epsilon l_B(n)\}$;
	    \item $B_\epsilon^T\eqdef\{n\leq T, \modif{n\notin V}~|~\cL(n)=B, t_{B_2}(n) \leq \epsilon l_B(n)\}$.
	\end{itemize}
\end{itemize}

\begin{remark}
Note that $V$ resembles $O$, the number of iterations which do not satisfy unimodality. We artificially also include in $O$ the time steps with wrong ordering for at least two elements in $B$. The proof should hold using $O$, but $V$ allows for a {\em static} neighborhood when bounding $B_\epsilon^T$, which circumvents technical details, with no loss in asymptotic regret. 
\end{remark}

Similarly to bounding $|O|$, we show that $|V|=\cO(1)$.
We use the following decomposition:
    \begin{align}
        V &\subseteq \bigcup_{x,y \in B^2, \mu_x < \mu_y} \{n \in \Lambda_{x,y}~|~\hmu_x>\hmu_y\};
    \end{align}
    where $\forall x,y, \forall s, \Lambda_{x,y}(s) \eqdef \{n~|~\cL(n)=B, l_B(n)=s\}$. For all $x,y$, 
    \begin{itemize}
        \item $\forall s, |\Lambda_{x,y}(s)| \leq 1$;
        \item $\forall s, \forall n \in \Lambda(s),\ t_x(n) \geq t_B(n) \geq l_B(n)/(\gamma+1)=s/(\gamma+1)\geq \epsilon s$, for some $\epsilon<1/(\gamma+1)$, and similarly for $y$.
    \end{itemize}
    Then, it holds that:
    \begin{align}
        V   &\subseteq \bigcup_{x,y \in B^2, \mu_x < \mu_y} \{n \in \Lambda_{x,y}~|~\hmu_x>\mu_x+\frac{\delta}{2} \vee \hmu_y<\mu_y-\frac{\delta}{2}\} \\
            & \subseteq \bigcup_{x,y \in B^2, \mu_x < \mu_y} \{n \in \Lambda_{x,y}~|~\hmu_x>\mu_x+\frac{\delta}{2}\} \cup \{n\in\Lambda_{x,y}~|~\hmu_y<\mu_y-\frac{\delta}{2}\},
    \end{align}
    where $\delta=\min_{a,b \in E, a \neq b}|\mu_a - \mu_b|$.
    It follows from Lemma~\ref{app:lemma:CombesProutiere:B1} that $|V| \leq D^2 \frac{8}{\epsilon \delta^2} = \cO(1)$.

We below bound the number of iterations at which a suboptimal arm $B$ is the leader, given that $B_2\in\cN_{B,\vhmu(t)}$.

\paragraph{\underline{$\mathds{E}[A_\epsilon]<\infty$ for any $T$}}

Let $n \in A_\epsilon$, and let $s\eqdef l_B(n)$.
By design of Alg.~\ref{alg:MAUB}, $t_B(n) \geq s/(\gamma+1)$, and by definition of $A_\epsilon$, $t_{B_2}(n)\geq \epsilon l_B(n)=\epsilon s.$ 
In light of those observations, the following applies Lemma~\ref{lemma:CombesProutiere:B2}. 
For any $s$, let $\Lambda(s)=\{n\in A_\epsilon| l_B(n)=s\}$, and $\Lambda=\cup_{s\geq 1} \Lambda(s)$. 
\begin{itemize}
    \item As any $n$ in $A_\epsilon$ is required to verify $\cL(n)=B$, and since $l_B(\cdot)$ increases each time $B$ is leader, $|\Lambda(s)|\leq 1$ (the time step such that $B$ is leader for the $s$-th time is unique).
    \item At any $p\in A_\epsilon$, either $B$ was already leader before, either a leader change occurred and led to $B$. In case of a leader change, a greedy algorithm was performed (Line~\ref{alg:line:greedy}) and $\vhmu_{B}$ was therefore better than $\vhmu{B}_2$. In case of $B$ staying leader, $\vhmu_{B}$ was better than $\vhmu_{B_2}$ (Line~\ref{alg:line:testIfNewLeaderNeeded}). 
    It therefore holds that $p\in A_\epsilon \implies p\in\{k\in\Lambda, \vhmu_{B}(k)\geq\vhmu_{B_2}(k)\}$, meaning that $A_\epsilon \subset \{k\in\Lambda, \vhmu_B(k)\geq\vhmu_{B_2}(k)\}$. 
	Using the fact that $B_2=B-x^*+y^*$, we have $A_\epsilon\subseteq\{p\in\Lambda, \hmu_{x^*}(p) \geq \hmu_{y^*}(p)\}$.
    \item Let $n\in \Lambda(s)$, for some $s$. By definition, $l_B(n)=s$. \texttt{MAUB} forces the leader to be played at least $\lfloor 1/(\gamma+1) \rfloor \cdot s$ times, so that $t_B(n)\geq \lfloor 1/(\gamma+1)\rfloor \cdot s$. Picking\footnote{Later, another constraint shall be put on the choice of $\epsilon$, but both are coherent.} epsilon sufficiently small with respect to the fixed $\gamma$ constant, it holds that $t_B(n)\geq\epsilon\cdot s$. It follows that $t_{x^*}(n)\geq t_B(n)\geq \epsilon \cdot s$.
	\item Let $n\in \Lambda(s)\subset A_\epsilon$, for some $s$. By definition of $A_\epsilon$, $n$ is such that $t_{B_2}(n) \geq \epsilon l_B(n) = \epsilon \cdot s$. It follows that $t_{y^*}(n)\geq t_{B_2}(n) = \epsilon \cdot s$.
	\item Let $n\in\Lambda(s)$. By assumption on the bandit problem, $\mathds{E}[\vhmu_{B_2}]-\mathds{E}[\vhmu_{B}]=\mathds{E}[\hmu_{y^*}] - \mathds{E}[\hmu_{x^*}] = \mu_{y^*}-\mu_{x^*} \geq \delta$ (with $\delta=\min_{a,b\in E, a \neq b} |\mu_a-\mu_b|$). 
    \item 
	Finally, by design of \texttt{MAUB}, the event $\{n\in\Lambda(s)\}$ clearly is measurable w.r.t. $(X_e(t))_{e\in E,1\leq t\leq T}$.
\end{itemize}
We thus apply Lem.~\ref{app:lemma:CombesProutiere:B1}, and get $\mathds{E}[|A_\epsilon|]<\infty$, for any $\epsilon, T$.

\paragraph{\underline{$\mathds{E}[B_\epsilon^T]<\infty$ for any $T$}}

Let:
\begin{itemize}
    \item $C_\delta \eqdef \{n\notin V~|~\cL(n)=B, \exists e\in B,\ |\hat{\mu}_e(n) - \mu_e|>\delta\}$ be the set of instants at which $B$ is badly estimated on at least one element $e$;
    \item $D_\delta \eqdef \cup_{J\in\cN_{B,\vmu}\setminus\{B_2\}} D_{\delta, J}$, where $\forall J,\ D_{\delta,J}\eqdef\{n\notin V~|~\cL(n)=B, \cP(n)=J,\ \exists w\in J,\  |\hat{\mu}_w(n) - \mu_w|>\delta\}$ is the set of instants at which $B$ is the leader, $J$ is chosen to be played while being badly estimated on at least one element $w$;
    \item $E^T\eqdef\{n\leq T, n\notin V~|~\cL(n)=B, \exists e\in B_2,\ \bar{\mu}_e(n)\leq\mu_{e}\}$ be the set of instants at which $B$ is the leader and the optimistic statistic $\vbmu$ underestimates at least one value $\mu_e$ for $e$ in $B_2$. 
\end{itemize}

The first step to bound $B_\epsilon^T$ is to show that for a properly chosen $\epsilon$, $|B_\epsilon^T|=2\gamma(1+\gamma)\cdot[|C_\delta|+|D_\delta|+|E^T|]+\cO_{T\to\infty}(1)$. 
Let $n\in B_\epsilon^T$ be a time step, let $B$ be the leader at $n$, and let $B_2$ be the best arm in $\cN_{B, \vmu}$. It holds that 
\begin{align}
l_B(n)=t_{B,B_2}+\sum_{J\in\cN_{B,\vmu}\cup\{B\}} t_{B,J}(n).
\end{align}
In other words, the number of time that $B$ was leader equals the sum, for all $J$ in $\cN_{B,\vmu}$, of the number of times $J$ was chosen while $B$ was the leader. 
Some information are known for $B_2$ and $B$: $t_{B_2}(n)\leq\epsilon l_B(n)$ as $n\in B_\epsilon^T$, and $t_B(n) \geq l_B(n)/(\gamma+1)$.
It follows that
\begin{align}
    (1-\epsilon)l_B(n) \leq \sum_{J\in\cN_{B,\vmu} \cup \{B\} \setminus\{B_2\}} t_{B,J}(n).
\label{eq:leaderNumberAsArmPlayed}
\end{align}
Let $\epsilon<\frac{1}{2\cdot (\gamma+1)}$\footnote{Note that this choice is coherent with the previous constraint on $\epsilon$ ($\epsilon<1/(\gamma+1)$)}\footnote{The following aims at knowing either that $B$ was played enough for it to be unlikely that it was chosen over $B_2$ (while $B_2$ is not overestimated, else, $n\in E^T$ holds!) or another $J$ was played enough for it to be unlikely that $J$ was picked over $B_2$. $B$ and $J$ must be treated differently as $t_{B,B}(n)\geq s/(\gamma+1)$ by design of \texttt{MAUB}.}. We show that it is impossible that both:
\begin{itemize}
    \item $\forall J\in\cN_{B, \vmu}\setminus\{B_2\},\ t_{B,J}<l_B(n)/(\gamma+1)$; and
    \item $t_{B,B}<(3/2)l_B(n)/(\gamma+1)+1$
\end{itemize}
hold.
Assume that both hold. 
Then, we show that the right-hand side of Eq.\ref{eq:leaderNumberAsArmPlayed} can be strictly upper-bounded by $(1-\epsilon)l_B(n)$, which creates an absurdity.
\begin{align}
    \sum_{J\in\cN_{B,\vmu}\setminus\{B_2\}}t_{B,J}(n) &< (3/2)\frac{l_B(n)}{(\gamma+1)}+1 + \sum_{J\in\cN_{B,\vmu}\setminus\{B_2\}} t_{B,J}(n) \\
				  &< (3/2)\frac{l_B(n)}{(\gamma+1)}+1 + (\gamma-1) \cdot \frac{l_B(n)}{(\gamma+1)} \qquad \text{(since $\gamma=|\cN_{B,\vmu}|$)} \\
				      &\leq \frac{l_B(n)}{(\gamma+1)} \cdot [(3/2)+(\gamma-1)] \\
				      &=\frac{l_B(n)}{2(\gamma+1)}\cdot [2\cdot\gamma+1].
\end{align}
By assumption on $\epsilon$, it also holds that:
\begin{align}
    (1-\epsilon)l_B(n) &> (1-\frac{1}{2(\gamma+1)})l_B(n) \\
		       &= (\frac{2\gamma+1}{2(\gamma+1)})l_B(n).
\end{align}
We conclude that $\sum_{J\in\cN_{B,\vmu}\cup\{B\}\setminus\{B_2\}}t_{B,J}(n)<(1-\epsilon)l_B(n)$, which is absurd, given \Cref{eq:leaderNumberAsArmPlayed}. 

\paragraph{(a)}
Assume that $\exists J\in\cN_{B,\vmu}\setminus\{B_2\},\ t_{B,J}(n)\geq l_B(n)/(\gamma+1)$. Let $J$ be such neighbor. Let $\phi(n)$ be the unique time step such that $\cL(\phi(n))=B$, $\cP(\phi(n))=J$, and $t_{B,J}(\phi(n))=\lfloor l_B(n)/2\cdot (\gamma+1) \rfloor$. 
$\phi(n)<n$ exists as (i) $t_{B,J}$ is only incremented when $B$ is the leader and $J$ is selected; and (ii) $n \mapsto l_{B}(n)$ is increasing. The unicity of $\phi$ comes from $t_{B,J}(\cdot)$ being increased at $\phi(n)$. 
In the following, we assume $\phi (n) \notin C_\delta \cup D_\delta \cup E^T$. 
Let $(x_{B_2}^*, y_{B_2}^*, x_J, y_J) \in [B\times B_2] \times [B \times J]$ be uniquely defined by:
\begin{align}
& B_2=B-x_{B_2}^{*}+y_{B_2}^{*}\text{, and}\\
& J=B-x_{J}+y_{J}.
\end{align}
The comparison between $J$ and $B_2$ boils down to a study of those four elements at $\phi(n)$.
Note that there is no reason for $B_2$ to be in $\cN_{J,\vmu}$, and conversely. 
Yet, $\phi(n)$ not belonging to $C_\delta \cup D_\delta \cup E^T$ implies that (i) $x_J$, $y_J$, and $x_{B_2}$ are correctly estimated up to the constant $\delta$, and (ii) $\bmu_e>\mu_e$ for all $e$ in $B_2$.
By definition, 
\begin{align}
    \bmu_{J}(\phi (n)) = \sum_{e \in J} \hmu_e(\phi(n)) + \sum_{e\in J} \sqrt{\frac{2\cdot\log(l_B(\phi(n)))}{t_e(\phi(n))}},
\end{align}
so that 
\begin{align}
    &\bmu_{J}(\phi (n))- \bmu_{B_2}(\phi(n)) \\
    &= -\hmu_{x_J}(\phi(n)) + \hmu_{y_J}(\phi(n)) + \hmu_{x^*_{B_2}}(\phi(n)) - \hmu_{y^*_{B_2}}(\phi(n)) \\
    & - \sqrt{\frac{2\cdot\log(l_B(\phi(n)))}{t_{x_J}(\phi(n))}} + \sqrt{\frac{2\cdot\log(l_B(\phi(n)))}{t_{y_J}(\phi(n))}}
    + \sqrt{\frac{2\cdot\log(l_B(\phi(n)))}{t_{x^*_{B_2}}(\phi(n))}} - \sqrt{\frac{2\cdot\log(l_B(\phi(n)))}{t_{y^*_{B_2}}(\phi(n))}}\\
    &< - (\mu_{x_J}-\delta) +\mu_{y_J}+\delta+\mu_{x_{B_2}^*}+\delta - \hmu_{y_{B_2}^*}(\phi(n))\\
    & - \sqrt{\frac{2\cdot\log(l_B(\phi(n)))}{t_{x_J}(\phi(n))}} + \sqrt{\frac{2\cdot\log(l_B(\phi(n)))}{t_{y_J}(\phi(n))}}
    + \sqrt{\frac{2\cdot\log(l_B(\phi(n)))}{t_{x^*_{B_2}}(\phi(n))}} - \sqrt{\frac{2\cdot\log(l_B(\phi(n)))}{t_{y^*_{B_2}}(\phi(n))}}\\
    &=-\mu_{x_J}+\mu_{y_J}+3\delta + \mu_{x_{B_2}^*} - \hmu_{y_{B_2}^*}(\phi(n))\\
    &- \sqrt{\frac{2\cdot\log(l_B(\phi(n)))}{t_{x_J}(\phi(n))}} + \sqrt{\frac{2\cdot\log(l_B(\phi(n)))}{t_{y_J}(\phi(n))}}
    + \sqrt{\frac{2\cdot\log(l_B(\phi(n)))}{t_{x^*_{B_2}}(\phi(n))}} - \sqrt{\frac{2\cdot\log(l_B(\phi(n)))}{t_{y^*_{B_2}}(\phi(n))}}\\
    \intertext{($\downarrow$ since $\bmu_{y_{B_2}^*}>\mu_{y_{B_2}^*}$ by assumption on $\phi$)}\\
    &<\mu_{x_J}-\mu_{y_J}+\mu_{x_{B_2}^*} + 3\delta -\mu_{y_{B_2}^*} \\
    &- \sqrt{\frac{2\cdot\log(l_B(\phi(n)))}{t_{x_J}(\phi(n))}} + \sqrt{\frac{2\cdot\log(l_B(\phi(n)))}{t_{y_J}(\phi(n))}}
    + \sqrt{\frac{2\cdot\log(l_B(\phi(n)))}{t_{x^*_{B_2}}(\phi(n))}}\\
    &<-\mu_{x_J}+\mu_{y_J}+\mu_{x_{B_2}^*}+3\delta-\mu_{y_{B_2}^*} + \sqrt{\frac{2\cdot\log(l_B(\phi(n)))}{t_{y_J}(\phi(n))}} + \sqrt{\frac{2\cdot\log(l_B(\phi(n)))}{t_{x^*_{B_2}}(\phi(n))}}\\
    &=\mu_{y_J}-\mu_{x_J}+\mu_{x_{B_2}^*}-\mu_{y_{B_2}^*}+3\delta + \sqrt{\frac{2\cdot\log(l_B(\phi(n)))}{t_{y_J}(\phi(n))}} + \sqrt{\frac{2\cdot\log(l_B(\phi(n)))}{t_{x^*_{B_2}}(\phi(n))}}\\
\end{align}
But now, since $\mu_{B_2}>\mu_{J}$, one can pick $\delta$ such that $\delta<([\mu_{y_{B_2}^*}-\mu_{x_{B_2}^*}] - [\mu_{y_J} - \mu_{x_J}])/5$. 

By design of \texttt{MAUB}, and by assumption (a), it respectively holds that $t_{x_{B_2}^*}(\phi(n)) \geq l_B(\phi(n))/(\gamma+1) \geq l_B(\phi(n))/2(\gamma+1)$, and $t_{y_J}(\phi(n)) \geq t_J(\phi(n)) \geq t_{B,J}(\phi(n)) \geq l_B(\phi(n))/2(\gamma+1)$, we have:
\begin{align}
    &\sqrt{\frac{2\cdot\log(l_B(\phi(n))}{t_{x_{B_2}^*}}}< \sqrt{\frac{2\cdot\log(l_B(\phi(n))}{l_{B}(n)/(2(\gamma+1)}} < \sqrt{\frac{2\cdot\log(l_B(n))}{l_{B}(n)/2(\gamma+1)}}\text{, and}\\
    &\sqrt{\frac{2\cdot\log(l_B(\phi(n))}{t_{y_{J}^*}}}< \sqrt{\frac{2\cdot\log(l_B(\phi(n))}{l_{B}(n)/2(\gamma+1)}} < \sqrt{\frac{2\cdot\log(l_B(n))}{l_{B}(n)/2(\gamma+1)}}.
\end{align}
But then, $n \mapsto \sqrt{\frac{2\cdot\log(l_B(n))}{l_{B}(n)/2(\gamma+1))}}$ converges to $0$ as $n$ goes to $\infty$. Therefore, there must exist $l_0$ such that $l_B(n) \geq l_0$ implies that $\sqrt{\frac{2\cdot\log(l_B(n))}{l_{B}(n)/2(\gamma+1)}}<\delta$.

\paragraph{(b)}

Now assume that $t_{B,B}\geq (3/2)l_B(n)/(\gamma+1) +1$. 
$B$ and $B_2$ can directly be compared through $x_{B_2}^*$ and $y_{B_2}^*$.
There are at least $l_B(n)/2(\gamma+1)+1$ instants $\tilde n$ such that $B$ was selected normally ({\em i.e.} not forced). 
By the same reasoning as in (a), there exist a $\phi(n)$ such that $\cL(\phi(n))=B$, $\cP(\phi(n))=B$, $t_{B,B}(\phi(n))=\lfloor l_B(n)/2(\gamma+1) \rfloor$ and $(l_B(\phi(n))-1)$ is not a multiple of $1/(\gamma+1)$. Thus, $\vbmu_B(\phi(n))\geq \vbmu_{B_2}(\phi(n))$. Similar derivations as in (a), with $J=B$ produce an absurdity, which finally gives $\phi(n)\in C_\delta \cup D_\delta \cup E^T$.

Let us define $B_{\epsilon,l_0}^T=\{n\leq T~|~n \in B_\epsilon^T, l_B(n) \geq l_0\}$. We have $|B_{\epsilon,l_0}^T|\leq l_0 + |B_\epsilon^T|$. $\phi: n\mapsto\phi(n)$ is a mapping from $B_{\epsilon,l_0}^T$ to $C_\delta \cup D_\delta \cup E^T$. To bound the size of $B_{\epsilon,l_0}^T$, we use the following decomposition:
$$
\{n~|~n\in B_{\epsilon,l_0}^T, l_B(n) \geq l_0\}\subseteq\cup_{n' \in C_\delta \cup D_\delta \cup E^T} \{n~|~n\in B_{\epsilon,l_0}^T,\ \phi(n)=n'\}.
$$
Let us fix $n'$. If $n \in B_{\epsilon,l_0}^T$ and $\phi(n)=n'$, then $\lfloor l_B(n)/2(\gamma+1) \rfloor \in \cup_{J\in\cN_{B,\vmu}\setminus\{B_2\}} \{t_{B,J}(n')\}$ and $l_B(n)$ is incremented at time $n$ because $\cL(n)=B$. 
Therefore:
$$
|\{n~|~n \in B_{\epsilon,l_0}^T,\ \phi(n)=n'\}|\leq 2\gamma(\gamma+1)
$$
Using union bound, we obtain the desired result:
$$
|B_\epsilon^T|\leq l_0+|B_{\epsilon,l_0}^T|\leq \cO(1) + 2\gamma(\gamma+1)(|C_\delta|+|D_\delta|+|E^T|).
$$

\paragraph{Bound on $C_\delta$}

We apply Lem.~\ref{app:lemma:CombesProutiere:B1} with $\Lambda(s)\eqdef\{n\leq T| \cL(n)=B, l_B(n)=s\}$, and $\Lambda\eqdef\cup_{s\geq 1}\Lambda(s)$. It holds that: 
\begin{align}
    C_\delta&=\{n\in\cup_s\Lambda(s)~|~\exists x, |\hmu_x(n) - \mu_x|>\delta\}\\
	    &\subseteq\cup_x\{n\in\cup_s\Lambda(s)~|~|\hmu_x(n)-\mu_x|>\delta\}\\
	    &\eqdef\cup_x C_{\delta,x}.
    \end{align}
    Since each time $B$ is leader, $l_B(n)$ increases, $\forall s,\ |\Lambda(s)|\leq 1$. By design of \texttt{MAUB}, it holds that $t_B(n)\geq1/(\gamma+1) \cdot l_B(n)$, so that $\forall s,\ t_B(s)>\epsilon\cdot s$ for some $\epsilon<1/(\gamma+1)$. 
    Thus, for all $n,x\in C_\delta\cup B$, $t_x(n)\geq \epsilon s$. 
    For all $x\in B$, Lem.~\ref{app:lemma:CombesProutiere:B1} gives $\mathds{E}[|C_{\delta,x}|]=\mathds{E}[\sum_n\mathds{1}_{\{n\in\Lambda~|~|\hmu_x(n) - \mu_x|>\delta\}}]=\cO(1)$.
    We thus get $\mathds{E}[|C_\delta|]\leq \sum_{x\in B} \mathds{E}[|C_{\delta,x}|]=\cO(1)$.

\paragraph{Bound on $D_\delta$}

For any $J\in\cN_{B,\vmu}$, for any $x \in J$ we apply Lem.~\ref{app:lemma:CombesProutiere:B1} with $\Lambda_{J,x}(s)\eqdef\{n\leq T| \cL(n)=B, \cP(n)=J, t_J(n)=s\}$. 
Then:
\begin{align}
    D_\delta \subseteq \cup_J \cup_x \{n\in \Lambda_{J,x}(s)~|~|\hmu_x-\mu_x|>\delta\}
\end{align}
Since each time $B$ is leader and $J$ is played, $t_J(n)$ increases, $\forall s, \forall x, \ |\Lambda_{J,x}(s)|\leq 1$. By definition of $\Lambda_{J,x}$, it holds that $\forall n\in\Lambda(s), t_{x}(n)=s>\epsilon\cdot s$, for some $\epsilon<1$. 
For all $J,x$, Lem.\ref{app:lemma:CombesProutiere:B1} gives $\mathds{E}[\sum_n\{n\in \Lambda_{J,x}(s)~|~|\hmu_x-\mu_x|>\delta\}] = \cO(1)$.
Summing over the finite set $J$, and over the finite neighborhood $\cN_{B,\vmu}$, we finally get $\mathds{E}[|D_{\delta}|]=\cO(1)$.

\paragraph{Bound on $E^T$}

It holds that $E^T=\cO(\log\log(T))$.
For all $x^*$ in $B_2$, and for all $t$ in $\mathbb{N}^*$, it holds that:
\begin{align}
    \mathds{P}(\hmu_{x^*}(t)\leq\mu_{x^*}-\sqrt{\frac{2\log(l_{B}(t))}{N_{x^*}(t)}}) 
    &\leq \exp(-2 \frac{N_{x^*}(t) 2 \log(l_B(t))}{N_{x^*}(t)}) \qquad \text{(Hoeffding's inequality)}\\
    & = \exp(-4\log(l_B(t)))\\
    &=l_B(t)^{-4}.
\end{align}

For all $s$, let $\Lambda(s)\eqdef\{n \leq T~|~l_B(n)=s, \cL(n)=B\}$ be the set of unique time step where $B$ is the leader for the $s$-th time. We define $\Lambda\eqdef\cup_s\Lambda(s)$
For all $s$, let $\{\phi_s\}=\Lambda(s)$ if $\{1,\dots,T\}\cap\Lambda \neq \emptyset$ and $\phi_s=T+1$ otherwise.
\begin{align}
    \label{eq:hoeffding_start}
&\mathds{E}[\sum_{n=1}^T \mathds{1}\{\cL(n)=B, \bmu_{x^*}(n)<\mu_{x^*}\}] \\
&=\mathds{E}[\sum_{n=1}^T \mathds{1}\{n\in\Lambda,\ \bmu_{x^*}(n)<\mu_{x^*}\}] \\
&\leq \mathds{E}[\sum_{s \geq 1}^T \mathds{1}\{ \hmu_{x^*}(\phi_s)+\sqrt{\frac{2\log(l_B(\phi_s))}{t_x(\phi_s)}}<\mu_{x^*}, \phi_s \leq T\}] \\
&\leq \mathds{E}[\sum_{s \geq 1}  \mathds{1}\{ \hmu_{x^*}(\phi_s)+\sqrt{\frac{2\log(s)}{t_x(\phi_s)}}<\mu_{x^*}\}] \\
\label{eq:hoeffding_end}
\end{align}

All this put together gives $\mathds{E}[B_\epsilon^T]=\cO(\log\log(T))$, and adding the bound on $\mathds{E}[A_\epsilon]$ yields $\mathds{E}[l_B(T)]=\cO(\log\log(T))$, which concludes the proof.
\end{proof}

\section{Regret}
\label{app:proof:regretMAUB}

\regretMAUB*

\begin{proof}
    Given Th.~\ref{th:suboptimalArms}, we restrict the study to the iterations for which $B^*$ is the leader. \texttt{MAUB} behaves as a classical \texttt{UCB} on the finite neighborhood of $B^*$. We therefore lightly adapt \citep{aue-ml-02}'s proof to our case. Let $B=B^*-\mu_{\sigma(e)}+\mu_e$.

	Let us upper bound the number of times $t_B(n)$ that a suboptimal arm $B$ was played up to time step $n$. Let $\ell \in \mathbb{N}^*$.
    \begin{align}
	t_B(n)&=\sum_{t=1}^n \mathds{1}_{\{\cP(t)=B\}}\\
	&\leq \ell + \sum_{t=1}^n \mathds{1}_{\{\cP(t)=B,\ t_B(t)\geq\ell\}}.
    \end{align}
    But now, for $B$ to be played at time step $t$, it must hold that $\vbmu_B(t)>\vbmu_{B^*}(t)$, so that the event $\{\cP(t)=B\}$ is a subset of the event $\{\vbmu_B(t)>\vbmu_{B^*}(t)\}$.
    It follows that: 
    \begin{align}
	\mathds{E}[t_B(n)]&\leq \ell + \mathds{E}[\sum_{t=1}^n \mathds{1}_{\{\vbmu_B(t)>\vbmu_{B^*}(t),\ t_B(t)\geq\ell\}}]\\
	      &\leq \ell + \sum_{t=1}^n \mathds{P}(\hmu_{e}(t)+\sqrt{\frac{2\log(l_{B^*}(t))}{N_{e}(t)}}>\hmu_{\sigma(e)}(t)+\sqrt{\frac{2\log(l_{B^*}(t))}{N_{\sigma(e)}(t)}},\ t_B(t)\geq\ell).
    \end{align}
    The event $\{\hmu_{e}(t)+\sqrt{\frac{2\log(l_{B^*}(t))}{N_{e}(t)}}>\hmu_{\sigma(e)}(t)+\sqrt{\frac{2\log(l_{B^*}(t))}{N_{\sigma(e)}(t)}},\ t_B(t)\geq\ell)\}$ is included in the event $\bigcup_{s=1}^n\bigcup_{s_i=\ell}^n\{\hmu_{e}+\sqrt{\frac{2\log(l_{B^*}(t))}{s_i}}>\hmu_{\sigma(e)}+\sqrt{\frac{2\log(l_{B^*}(t))}{s}}\}$.
    Now observe that $\{\hmu_{e}+\sqrt{\frac{2\log(l_{B^*}(t))}{s_i}}>\hmu_{\sigma(e)}+\sqrt{\frac{2\log(l_{B^*}(t))}{s}}\}$ implies that at least one of the following must hold:
    \begin{align}
	\hmu_{\sigma(e)}\leq\mu_{\sigma(e)}-\sqrt{\frac{2\log(l_{B^*}(t))}{s}}\\
	\hmu_{e}\geq\mu_{e}+\sqrt{\frac{2\log(l_{B^*}(t))}{s_i}}\\
	\mu_{\sigma(e)}<\mu_{e}+2\sqrt{\frac{2\log(l_{B^*}(t))}{s_i}}.
    \end{align}

    For $\ell\geq\lfloor 8\log(l_{B^*}(n))/|\mu_{\sigma(e)}-\mu_{e}|^2 \rfloor+1$, the last event can not happen. 

    Both other events are bounded as before (Equations \ref{eq:hoeffding_start} to \ref{eq:hoeffding_end}) using Hoeffding's inequality. It follows that for all $B\in \cN_{B^*,\vmu}$, $t_B(n) = \cO(8\log(n)/|\mu_{\sigma(e)}-\mu_{e}|)$. Summing over $\cN_{B^*,\vmu}$ gives the overall regret $\cO(\sum_{e \notin B^*} \frac{8\cdot\log T}{\mu_{\sigma(e)} - \mu_e})$
\end{proof}

\end{document}